\newcommand{\error}[1]{ \left| \mathbb{E}_{(x,y) \sim \mathcal{D}} \ [\one (#1(x) \neq y)] - \ \mathbb{E}_{(x,y) \sim \mathcal{D}} \ [\one (h^*(x) \neq y)] \right|}
\newcommand{\hhat}{\hat{h}}
\newcommand{\hstar}{h^*}
\newcommand{\hclass}{\mathcal{H}}
\newcommand{\DAC}{\widetilde{\mathcal{D}}_A}
\newcommand{\DBC}{\widetilde{\mathcal{D}}_B}
\newcommand{\RA}{P_{(x,y) \sim \dist } [x \in A]}
\newcommand{\normalF}{F}
\newcommand{\corruptF}{\widetilde{F}}
\newcommand{\dist}{\mathcal{D}}
\newcommand{\DA}{\mathcal{D}_A}
\newcommand{\DB}{\mathcal{D}_B}
\newcommand{\closure}{cl(\mathcal{H})}
\newcommand{\Bern}{\text{Bernoulli}}
\title{On the Vulnerability of Fairness Constrained Learning to Malicious Noise}
\author[1]{Avrim Blum}
\author[1]{Princewill Okoroafor}
\author[1]{Aadirupa Saha}
\author[1]{Kevin Stangl}
\affil[1]{Toyota Technological Institute, Chicago\protect\\
{\footnotesize \texttt{pco9@cornell.edu}}}
\date{}
\begin{document}

\maketitle

\begin{abstract}
We consider the vulnerability of fairness-constrained learning to small amounts of malicious noise in the training data. \cite{lampert} initiated the study of this question and presented negative results showing there exist data distributions where for several fairness constraints, any proper learner will exhibit high vulnerability when group sizes are imbalanced. Here, we present a more optimistic view, showing that if we allow randomized classifiers, then the landscape is much more nuanced. For example, for Demographic Parity we show we can incur only a $\Theta(\alpha)$ loss in accuracy, where $\alpha$ is the malicious noise rate, matching the best possible even without fairness constraints. For Equal Opportunity, we show we can incur an $O(\sqrt{\alpha})$ loss, and give a matching $\Omega(\sqrt{\alpha})$ lower bound. In contrast, \cite{lampert} showed for proper learners the loss in accuracy for both notions is $\Omega(1)$. The key technical novelty of our work is how randomization can bypass simple "tricks" an adversary can use to amplify his power.  
We also consider additional fairness notions including Equalized Odds and Calibration. For these fairness notions, the excess accuracy clusters into three natural regimes $O(\alpha)$,$O(\sqrt{\alpha})$, and $O(1)$. These results provide a more fine-grained view of the sensitivity of fairness-constrained learning to adversarial noise in training data.
\end{abstract}

\section{Introduction}

The widespread adoption of machine learning algorithms across various domains, including recidivism prediction \cite{flores2016false,dieterich2016compas}, credit lending \cite{Kozodoi_2022}, 
and predictive policing \cite{lum2016predict}, 
has raised significant concerns regarding biases and unfairness in these models. Consequently, substantial efforts have been devoted to developing approaches for learning fair classification models that exhibit effective performance across protected attributes such as race and gender.

One critical aspect of addressing fairness in machine learning is ensuring the robustness of models against small amounts of adversarial corruption present in the training data. 
This data corruption may arise due to flawed data collection or cleaning processes \cite{saunders2013accuracy}, strategic misreporting \cite{hardt2016strategic}, under-representation of certain subgroups \cite{blum2019recovering}, or distribution shift over time \cite{schrouff2022maintaining}.

Empirical studies have demonstrated that such data unreliability is often centered on sensitive groups e.g. \cite{gianfrancesco2018potential}, emphasizing the need to understand the vulnerability of fair learning to adversarial perturbations. 
A concerning possibility is that fairness constraints might allow the adversary to amplify the effect of their corruptions by exploiting how these constraints require the classifier to have comparable performance on every relevant sub-group, even small ones.
%\footnote{Instead of thinking of our results in terms of an explicit adversary, they can also be interpreted as a worst case statement over corrupted data distributions, and that we want our fairness constrained learning pipelines to be `stable}'.

Previous work by \cite{lampert} and \cite{celis2021fair} have explored this topic from a theoretical perspective, considering different adversarial noise models. 
\cite{celis2021fair} focused on the $\eta$-Hamming model, where the adversary selectively perturbs a fraction of the dataset by modifying the protected attribute. 

\cite{lampert} on the other hand, investigated the Malicious Noise model, where an $\alpha$ fraction of the data-set (or distribution) 
is uniformly chosen and those data points are arbitrarily perturbed by the adversary.
We will focus on this Malicious Noise model.
In our study, we extend the framework of fair learning in the presence of Malicious Noise \cite{lampert} by  considering a broader range of fairness constraints and introducing a way to bypass some of their negative results by randomizing the hypothesis class.

\cite{lampert} present a pessimistic outlook, highlighting data distributions in which any proper learner, particularly in scenarios with imbalanced group sizes, exhibits high vulnerability to adversarial corruption when the learner is constrained by Demographic Parity \cite{calders2009building} or Equal Opportunity \cite{hardt16}. 
These results demonstrate novel and concerning challenges to designing fair learning algorithms resilient to adversarial manipulation in the form of 
Malicious Noise. 

The results of \cite{lampert} indicate that fairness constrained learning is much less robust than unconstrained learning.

In this paper, we  present a more optimistic perspective on the vulnerability of fairness-constrained learning to malicious noise by introducing randomized classifiers. 
By allowing randomized classifiers, we can explore alternative strategies that effectively mitigate the impact of malicious noise and enhance the robustness of fairness-constrained models.
In addition, we extend the analysis beyond the fairness constraints examined in \cite{lampert}, providing a complete characterization of the robustness of each constraint and revealing a diverse range of vulnerabilities to Malicious Noise.

\subsection{Our Contributions}
We bypass the impossibility results in \cite{lampert} by allowing the learner to produce a randomized improper classifier. This classifier is constructed from hypotheses in the base class $\mathcal{H}$ using our post-processing procedure, which we refer to as the $(P,Q)$-Randomized Expansion of a hypothesis class $\mathcal{H}$, or $\PQ$
%\footnote{$\PQ$ is chosen because the base hypotheses are parameterized by $p,q$.}
%in that we are including hypotheses near to $\mathcal{H}$.
\begin{definition}[$\PQ$] \label{defn:pq}
For each classifier $h \in \mathcal{H}$, for $p,q \in[0,1]$ 
% \pcomargincomment{this should actually be $p,q \in[0,1]^{|Z|}$, a vector of biases for each group}
% \kmsmargincomment{I think it is ok to be a bit informal in this section}
\begin{align*}
    h_{p,q}(x) : = 
    \begin{cases}
    h(x)\quad \text{ with probability }1-p \\
    y \sim \Bern(q) \quad \text{otherwise}
    \end{cases}
\end{align*}
We define $\PQ$ as the expanded hypothesis class created by the set of all possible
$h_{p,q}(x)$.
\[ 
%\PQ := \cup_{h \in \calH} \cup_{p,q \in [0,1]} \{ h_{p,q}(x) \} 
\PQ := \{ h_{p,q} \mid h \in \calH, p,q \in [0,1]\} 
\]
\end{definition}
When clear from context we drop the dependence on $p,q$ and simply refer to $\hat{h} \in \PQ$.

Larger $p$ means we ignore more of the information in the base classifier $h$ and rely on the $\Bern(q)$.
The main technical questions we address in this paper are:
\begin{center}
 \emph{
How susceptible and sensitive are fairness constrained learning algorithms
to Malicious Noise and to what extent does this vulnerability
depend on the specific fairness notion, especially if we allow improper learning?}

%  \emph{To What extent can improper learning bypass prior impossibility results for fairness constrained
% learning?}

%\emph{When a fairness aware ERM-learner is being targeted by an Adversary with malicious noise, by allowing the learner to exhibit a simple improper learning rule ($h^{'} \in \PQ$), can we exhibit hypotheses that are as robust to malicious noise as an ERM learner who is indifferent to fairness?}
\end{center}

%In the rest of the paper, we shall answer this question in the affirmative for some fairness constraints and show constant 
%
%In other words, the classical result \cite{kearns1988learning}
%\subsection{Our Contributions}

We focus on proving the existence of $h^{'} \in \PQ$ that satisfies a given fairness constraint and exhibits minimal accuracy loss on the original data distribution. 
Recall that $\alpha$ is the fraction of the overall distribution that is corrupted by the adversary.
%Our proofs are constructive. 
%We enumerate a hierarchy of fairness constraints and describe our results. 
%Here are the notions within order of robustness to malicious noise. (Again, here more robustness mean more accuracy on the natural data (despite the malicious noise) while still satisfying a given fairness constraint on biased data).
%In other words, train and select a model using the 

\begin{comment}
\begin{align*}
\textit{Demographic Parity} \geq Equal Opportunity \geq  Equalized Odds 
\geq Equal Error Rates
\end{align*}
\end{comment}

\begin{comment}
\begin{theorem} \label{thm: zoo}
Now we combine these fairness notions into one location and discuss their error in the presence of $O(\alpha)$ corruptions. 
Observe that from \cite{malnoise}, unconstrained ERM has an unconditional lower bound of $\Omega(\alpha)$.
This is our baseline and we want Fair ERM in the closure model to compete with this.
\begin{enumerate}
    \item Unconstrained ERM excess error $\Theta(\alpha)$ [exhibit $h^{*}$]
    \item Demographic Parity also has accuracy loss at most $\Theta(\alpha)$
    \item Eopp is at worst $\Omega\sqrt{\alpha})$
    \item Equal Error Rates : $\Omega(1)$
       \item Equalized Odds gets error $\min \{ r_A, r_B \}=\Omega(1)$

    \item Calibration has error in the worst case $\Omega(1)$
    %\item Min-Max Fairness has error in the worst case $\Omega(1)$
\end{enumerate}
\end{theorem}
\end{comment}

Our list of contributions is: 
\begin{enumerate}
\item We propose a way to 
%improperly learn a fair classifier in the 
bypass 
lower bounds \cite{lampert} in Fair-ERM with Malicious Noise by extending the hypothesis class using  the $\PQ$ notion.
%\item We show that this approach returns a classifier that 
%satisfies the fairness guarantee (parity and equal opportunity) with only a small loss in 
%accuracy when compared to the best hypothesis in the class. 
\item For the Demographic Parity \cite{calders2009building} constraint, our approach guarantees no more than $O(\alpha)$ loss in accuracy (which is \emph{optimal}
%the best one can expect 
in the Malicious Noise model \emph{without} fairness constraints \cite{malnoise}). 
In other words, in contrast to the perspective in \cite{lampert} which shows $\Omega(1)$ accuracy loss, we show that Demographic Parity constrained ERM can be made just as robust to Malicious Noise as unconstrained ERM.
\item For the Equal Opportunity \cite{hardt16} constraint, we guarantee no more than $O(\sqrt{\alpha})$ accuracy loss
and show that this is tight, i.e no classifier can do better. 
%We also show that no approach can do better than this.
\item For the fairness constraints Equalized Odds \cite{hardt16}, Minimax Error \cite{minimaxfair}, Predictive Parity, and our novel fairness constraint Parity Calibration, we show strong negative results.
%hat is, 
Namely, for each constraint there exist natural distributions such that an adversary that can force any algorithm to return a fair classifier that has $\Omega(1)$ loss in accuracy.
\item For Calibration \cite{faircalib}, we observe that the excess accuracy loss is at most $O(\alpha)$.
%\item Lastly we present sufficient conditions for a hypothesis class to be robust in the malicious adversarial model.
\end{enumerate}
%These results contradict \pcomargincomment{I think contradict is a strong wording here. Also, I think we're hammering a bit much on this distinction} parts of and extend the landscape introduced in \cite{lampert}. 
%\kmsreplace{This}{Our} work prompts \kmsdelete{high level} questions \pcomargincomment{what high level questions? It might be good to state them directly. Might not be obvious to the reader. I'm not sure what they are.} 
% These results prompts questions \pcomargincomment{"prompt". But also, what questions? This is very vague} about the right sensitivity level of fairness constraints.
%We do not focu

\begin{comment}
\begin{tabular}{ |p{2.7cm}||p{2.7cm}|p{2.7cm}|p{2.7cm}|p{2.7cm}| }
 \hline
 \multicolumn{5}{|c|}{ Results Summary} \\
 \hline
 Size of Group $B$ & Demographic Parity &  Equal Opportunity & Equal Error Rates & Equalized Odds \\
 \hline
 $ |A|=|B|$  & AF    &AFG&   004  & 420 \\
$|A| > O(\alpha) = |B|$ &   AX  & ALA   &248 & 420\\
 \hline
\end{tabular}
\end{comment}

\section{Related Work}
\cite{kearns1988learning} introduced the notion of malicious noise which is analyzed in \cite{bshouty2002pac, auer1998line, klivans2009learning, long2011learning, awasthi2014power}.
\cite{balcan2022robustly} considers a related adversary as a way to formalize data poisoning attacks in adversarial robustness
 \cite{goodfellow2014explaining}.
%\subsection{Fairness and Unreliable Data}

The interaction of fairness constraints with explicitly unreliable data is a critical research direction
%, which is extremely salient since 
since issues of bias and fairness are often closely connected with data reliability concerns \cite{gianfrancesco2018potential}.
Malicious noise is both a way to model an explicit adversary and a way to consider unknown natural issues with the data distribution.

\cite{celis2021fair}, which also studies fairness with data corruptions. 
%considers similar questions to the ones explored in this paper,
primarily focuses on the stronger Nasty Noise Model \cite{bshouty2002pac}
%That work focuses on providing algorithms and analysis under that model
combined with an assumption on the minimum size of groups/events. 
They do not consider how randomized post-processing improves robustness.

%\subsection{Comparision with \cite{lampert}}
The most closely related work to ours, \cite{lampert}, explores the limits of fairness-aware PAC learning within the classic malicious noise model of  \cite{valiant}, where the adversary can replace a uniformly random fraction of the data points with arbitrary data, with full knowledge of the learning algorithm, the data distribution, and the remaining examples. 
%They \footnote{Their work also considers}
% That paper, 
\cite{lampert} focuses on binary classification with just two popular group fairness constraints: Demographic Parity  \cite{calders2009building} and Equal Opportunity \cite{hardt16}. In addition to those constraints, we also consider Equalized Odds and multiple Calibration notions.
Similarly to \cite{lampert}, a key aspect of our results is how the size of the smaller group makes it more vulnerable to data corruption.

\begin{comment}
\begin{enumerate}
    \item First they show that learning under this adversarial model is provably impossible in a PAC sense - there is no learning algorithm that can ensure convergence with high probability
to a point on the accuracy-fairness Pareto front on the set of all finite hypothesis spaces, even in the limit of infinite training data.
\item Furthermore, the irreducible excess gap in the fairness measures they study is inversely proportional to the frequency of the rarer of the two protected attributes groups. This makes the robust learning problem especially hard when one of the protected subgroups in the data is underrepresented. 
\item They also show that the adversary can ensure that any learning algorithm will output a classifier that is optimal in terms of accuracy, but exhibits a large amount of unfairness. 
They also show that their hardness results are tight up to constant factors, in terms of the corruption ratio and the protected group frequencies, by proving matching upper bounds.
\end{enumerate}
\end{comment}

\subsection{Group Fairness Discussion}

Note that group fairness constraints \cite{chouldechova2017fair,klein16, hardt16} are relatively simple to evaluate and provide relatively weak guarantees in contrast to fairness notions in \cite{calders2009building, dwork2021outcome, hebert2018multicalibration}, among others.  
However, despite this weakness,  these group notions are used in practice \cite{metricsinpractice} to check model performance, so continuing to investigate them in parallel to the stronger fairness notions is worthwhile.

\cite{blum2019recovering} takes a somewhat converse perspective to this paper.
Instead of considering worst case instances for how much fairness constraints force excess accuracy loss with an adversary, that paper
asks how fairness constraints can help us recover from the biased data when the bias is more benign, but still complicates Empirical Risk Minimization.
Future work could connect our results with theirs by considering an intermediate adversary model, such as \cite{massart2006risk}.

Interestingly, there may be high level connections between our paper and work in federated learning with differing data quality levels, e.g \cite{chu2023focus}.
%\subsection{Black Box Fairness Processing \red{incomplete}}
%move this else where
%Our paper can be thought of as a way to black box post-process an existing model, in order to be fair and accurate.
% There is extensive literature on post processing methods since they 

 %xAdditionally, post processing in the presence of malicious noise is 
%Fair projection \cite{fairprojection}.

\section{Preliminaries}
\label{sec:prelim}
In fairness-constrained learning, the goal is to learn a classifier that achieves good predictive performance while satisfying certain fairness constraints that
connect the performance of the classifier on multiple groups, to ensure effective performance on all groups.

Specifically, we start with a dataset consisting of examples with feature vectors $(x \in \mathcal{X})$, labels $(y \in \mathcal{Y})$, and group attributes $(z \in \mathcal{Z})$. 
We assume that each example is drawn i.i.d from a joint distribution $\mathcal{D}$ of random variables $(X,Y,Z)$. 
There are multiple groups in the dataset, and we aim to ensure that the classifier's predictions do not unfairly favor or disfavor any particular group. 
We will denote $\mathcal{D}_z$ as the conditional distribution of random variables $X$ and $Y$ given $Z = z$. For simplicity, we will assume there are two disjoint groups: $A$ and $B$ in the dataset with B being the smaller and more vulnerable of the two. However, our results apply more broadly to any number of groups.

We aim to use the dataset to learn a classifier $f : \mathcal{X} \rightarrow \mathcal{Y}$ given a hypothesis class $\mathcal{H}$. 
However, in this paper we suppress sample complexity learning issues and focus on characterizing the
accuracy properties of the best hypothesis in the expanded hypothesis class with a corrupted data distribution $\widetilde{D}$.
The goal is to probe the fundamental sensitivity of Fair-ERM to unreliable data in the large sample limit.
%\footnote{This focus is s}.
% We will typically think of the classifier $f$ as being quite group 
% specific; i.e. the groups are easily identifiable from the data and the distributions $\mathcal{D}_{A}$, $\mathcal{D}_B$
% may be substantially different.\footnote{This is not  necessary for our technical results but is useful for reader clarity.} 
% \pcomargincomment{at this point, we have not yet introduced A and B yet as the groups. Ok, I just added something above} Thus, we will occasionally use the notation $f_{A}$ or $f_{B}$ to refer to the behavior of the classifier only on Group $A$ or $B$ respectively.\pcomargincomment{I actually already mentioned this at the end of the realizability paragraph. It seems we have a couple repetitions.}
% % Additionally, the reader should think of Group $B$ as the smaller and more vulnerable group.

To this end, we consider solving the standard risk minimization problem with fairness constraints, known as Fair-ERM.
\begin{align}
\min_{h\in\mathcal{H}} & ~~\mathbb{E}_{(X,Y,Z)\sim\mathcal{D}} \left[\one(h(X) \neq Y)\right] ~~~~\ \\
\text{subject to} & ~~~~ F_z(h) = F_{z'}(h) \qquad \forall z,z'\in \mathcal{Z}. \label{FairnessConstraint}
\end{align}
where $F_z(h)$ is some fairness statistic of $h$ for group $z$ given the true labels $y$, such as \emph{true positive rate} :
$ \text{(TPR)}: F_z(h) = \mathbb{P}(h(X)=+1|Y=+1,Z=z)$.

% $\delta$ represents the maximum allowable difference between the fairness statistic of the classifier for any two groups in the dataset. We will 
% As we will show, our upper bounds will produce classifiers satisfying the fairness 
% with the same value of $\delta$ as $h^{*}$, the Bayes-optimal fair classifier, but our lower bounds will apply even if we allowed a constant-sized gap in the fairness constraint.
% Equivalently, we will often think of $\delta$ as zero.

We make a mild \emph{realizability assumption} that there exists a solution to this risk minimization problem. That is, there is at least one hypothesis in the class that satisfies the fairness constraint. This optimal solution is denoted as $h^*$.

As noted above, since we allow our hypothesis class to be group-aware, we can reason about $h^*_z$ for all $z \in Z$, where $h^*_z$ is the restriction of the optimal classifier $h^*$ to members of group $z$. In other words, $h_{z}^{*}$ is the optimal group-specific classifier for Group $z$.

 We also make another mild assumption that each group $z$ has at least some constant fraction of positive examples, and at least some constant fraction of negative examples (e.g., the fraction of positive examples in each group is between 5\% and 95\%.   If positive or negative examples are too rare within any group, then some of the fairness notions do not make sense, and we find this assumption reasonable.

For the results in our paper, we only need the assumption that each group has non-trivial fraction of positives.
Formally, we assume that that for each fixed group $A$, where $r_A= P_{(x,y) \sim \mathcal{D} } [x \in A]$,
\begin{equation}
     P_{(x,y) \sim \mathcal{D}} [y = 1 \cap x \in A] := r_{A}^{+} \geq \frac{r_A}{c}
    \label{ass:raplus}
\end{equation}
for some  integer c.
Think $c \leq 20$.
This will allow the adversary to modify each group's true positive rate substantially, but not arbitrarily, because there is some non-trivial fraction of positives in each group.

\subsection{Fairness Notions}
Different formal notions of group fairness have previously been proposed in literature. These notions include, but are not limited to, Demographic Parity, Equal Opportunity, Equalized Odds, Minimax Fairness, and Calibration\cite{dwork2012fairness, calders2009building, hardt16, klein16, chouldechova2017fair}.

Selecting the “right” fairness measure is, in general, application-dependent.\footnote{We would also note that these fairness constraints are imperfect measures of fairness that likely do not capture all of the normative properties relevant to a specific task or system.  }%\pcomargincomment{need to add a bit more on fairness notions here}
% When we will call a classifier `unfair', it will mean that one of these fairness constraints is violated.
One of our goals in this work is to provide understanding of their implications under adversarial attack, which could aid in the selection process.
For the convenience of the reader, we include a table in Appendix \ref{fairtable} summarizing the fairness notions we consider in this paper. 
Other than Calibration, these all are notions for binary classifiers. 
%In Appendix \ref{subsec:minimaxfair}, we will consider Minimax Fairness \cite{minimaxfair} but 
%\pcoreplace{elide}{present} 
%omit the definition here for clarity.
In Section \ref{subsec:calib} we will introduce a new variant of Calibration and will defer discussion of that notion until then.

\subsection{Adversary Model}
\label{subsec:adversary}
Throughout this paper, we focus on the Malicious Noise Model, introduced by \cite{malnoise}.
This model considers a worst-case scenario where an adversary has complete control over a uniformly chosen $\alpha$ proportion of the training data and can manipulate that fraction in order to move the learning algorithm towards their desired outcomes, i.e. increasing test time error [on un-corrupted data].
\kmsdelete{However, that $\alpha$ fraction is chosen uniformly from natural examples, unlike in \cite{bshouty2002pac}.}

% \kmsreplace{We follow the notion of a Fairness-ware adversary in \cite{lampert}.}{The core fear in \cite{lampert} and our work is that an adversary could use the fairness constraints to amplify their power when they have a small corruption budget $\alpha$.}
%\cite{lampert} introduces the fairness-aware adversary who corrupts a fixed fraction of a data set in order to maximize the test time error [on un-corrupted data] of a learner who uses the corrupted data to train a model.
%\cite{lampert}.
In \cite{malnoise}'s model, the samples are drawn sequentially from a fixed distribution. With probability $\alpha$ and full knowledge of the learning algorithm, data distribution and all the samples that have been drawn so far, the adversary can replace sample $(x,y)$ with an arbitrary sample $(\tilde{x}, \tilde{y})$.

%We reframe the data generating procedure as follows: Each timestep $t$ 
At each time-step $t$,
\begin{enumerate}
    \item The adversary chooses a distribution $\widetilde{\mathcal{D}}_t$ that is $\alpha-$close to the original distribution $\mathcal{D}$ in Total Variation distance.
    \item The algorithm draws a sample $(x_t,y_t)$ from $\widetilde{\mathcal{D}}_t$ instead of $\mathcal{D}$
\end{enumerate}
Note that the adversary's choice at time $t$, $\widetilde{\mathcal{D}}_t$ can depend on the samples $\{x_1,y_1, \ldots, x_{t-1}, y_{t-1}\}$ chosen so far.
%\footnote{}

Reframing the Malicious Noise Model in this manner simplifies analysis and allows us to focus on the fundamental aspect of this model which is how the accuracy guarantees of fairness constrained learning change as a function of $\alpha$.

\subsection{Core Learning Problem}
\label{subsec: learningproblem}
In the fair-ERM problem with Malicious Noise, our goal is to find the optimal classifier $h^*$ subject to a fairness constraint. However, the presence of the Malicious Noise makes this objective challenging. 
Instead of observing samples from the true distribution $\mathcal{D}$, we observe samples from a corrupted distribution $\widetilde{\mathcal{D}}$. 

In the standard ERM setting, \cite{kearns1988learning} show that the optimal classifier that can be learned using this corrupted data is one that is $O(\alpha)$-close to $h^*$ in terms of accuracy [on the original distribution]. 
The fair-ERM problem with a Malicious Noise adversary introduces an additional layer of complexity, as we must also ensure fairness while achieving high accuracy. 

%We say a learning algorithm is robust to  if it returns a classifier $h$ that satisfies two conditions. 
%First, the perceived unfairness, i.e, the difference in the estimated fairness constraint $\widetilde{F}$ between any two %groups $z$ and $z'$ in $\widetilde{\mathcal{D}}$ should be at most $\delta$. Second, the expected error of $h$ and $h^*$ %with respect to $\mathcal{D}$ should differ by at most a function of $\alpha$, where $h^*$ is the optimal classifier for %the fair-ERM problem on the true distribution $\mathcal{D}$. 
\begin{definition}
\label{def:robust}
We say a learning algorithm for the fair-ERM problem is $\beta$-robust with respect to a fairness constraint $F$ in the malicious adversary model with corruption fraction $\alpha$, if it returns a classifier $h$ such that $\widetilde{F}_z(h) = \widetilde{F}_{z'}(h)$ and 
\begin{align*}
| \mathbb{E}_{\mathcal{D}} \ [\one (h(X, Z) \neq Y)] - \mathbb{E}_{\mathcal{D}} \ [\one  (h^{*} (X, Z) \neq Y)] | \leq \beta(\alpha)
\end{align*}
where $h^*$ is the optimal classifier for the fair-ERM problem on the true distribution $\mathcal{D}$ with respect to a hypothesis 
class $\mathcal{H}$ and $\beta$ is a function of $\alpha$.
\end{definition}
This definition captures the desired properties of a learning algorithm that can perform well under the malicious noise model while achieving both accuracy and fairness, as measured by the fairness constraint $F$.

Thus, this is an agnostic learning problem \cite{HAUSSLER199278} with an adversary and fairness constraints.
As referenced in the introduction, we will allow the learner to return $h^{'} \in \PQ$, where $\PQ$ is a way to post-process each
$h \in \mathcal{H}$ using randomness.
In Sections \ref{sec:mainresults} and  \ref{subsec:calib} will characterize the optimal value of $\beta$ given the relevant fairness constraint $F$ and base hypothesis class
$\mathcal{H}$.

\section{Main Results: Demographic Parity, Equal Opportunity and Equalized Odds}
\label{sec:mainresults}
%\kmsmargincomment{I want this version in orange and then 4.1, I did kms delete on previous version. 
%I want to strike this paragaph since I think we are being unnecessarily shy about our results. I do think the Lampert results and ours meanignfully clash and I think this extensive dive into Lampert will simply confuse the %reader/make us vulnerable to skeptical reviewers. 
%I could be amenable to a long/detailed comparsion/constrast in appendix}
We now present our technical findings for Demographic Parity, Equal Opportunity, and Equalized Odds, and show how randomization enables better accuracy for Fair-ERM with Malicious Noise.
\cite{lampert} show impossibility results for Demographic Parity and Equal Opportunity  where a \emph{proper} learner is forced to return a classifier with $\Omega(1)$ excess unfairness and accuracy
 compared to $h^{*}$ for a synthetic and finite hypothesis class/distribution. 
 
To overcome this limitation, we propose a novel approach to make the hypothesis class $\mathcal{H}$ more robust, by injecting noise into each hypothesis $h \in \mathcal{H}$. In other words, we allow improper learning, and refer to the resulting expanded set of hypotheses as $\PQ$.
By injecting controlled noise into the hypotheses, we effectively ``smooth out" the hypothesis class $\mathcal{H}$, making it more resilient against adversarial manipulation. 

Since we allow group-aware classifiers, we learn two classifiers $h_{A}, h_{B} \in \PQ$, typically distinct from each other.
% Our approach limits the amount of fairness loss for any hypothesis class and true distribution
% $\mathcal{D}$ that satisfies a mild realizability assumption that at least one classifier in the hypothesis class must satisfy the fairness constraint.
Our method minimizes fairness loss for any hypothesis class and true distribution $\mathcal{D}$, under the assumption that at least one classifier in the original hypothesis class $\mathcal{H}$ satisfies the fairness constraints. 
We aim to find a fair classifier $\hat{h} \in \PQ$ that is as good as the best $h^{*} \in \mathcal{H}$.

\subsection{Demographic Parity}
Demographic Parity \cite{calders2009building} requires that the decisions of the classifier are independent of the group membership;
%o%the samples
 that is, $P_{(x,y) \sim \DA} [h(x)=1] = P_{(x,y) \sim \DB} [h(x)=1]$\footnote{Note there is no reference in the definition to the true labels, so a trivial hypothesis that flips a random coin for all examples would satisfy this notion, albeit at minimal accuracy. }. 

When the original distribution $\mathcal{D}$ is corrupted, a fair hypothesis on $\mathcal{D}$ may seem unfair to the learner. In order to analyze our approach it is important to understand how the fairness violation of a fixed hypothesis changes after the adversary corrupts an $\alpha$ proportion of the distribution.

\begin{restatable}[Parity after corruption]{proposition}{corruptparity}\label{prop:corruptparity}
Let $\widetilde{\mathcal{D}}$ be any corrupted distribution chosen by the adversary, and $h$ be a fixed hypothesis in $\mathcal{H}$. For a fixed group $A$, the following inequality bounds the change in the proportion of positive labels assigned by $h$:
$
\left| P_{(x,y) \sim \DAC} [h(x)=1] - P_{(x,y) \sim \DA} [h(x)=1] \right|  \leq \\
 \frac{\alpha}{(1-\alpha) r_A + \alpha }
$
where $r_A = \RA$, i.e how prevalent the group is in the original distribution. 
\end{restatable}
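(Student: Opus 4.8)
The plan is to track how the adversary's $\alpha$ fraction of probability mass interacts with group $A$. Let $\mathcal{D}$ be the true distribution and $\widetilde{\mathcal{D}}$ the corrupted one, with $\mathrm{TV}(\mathcal{D},\widetilde{\mathcal{D}}) \le \alpha$. The key quantities are the original group mass $r_A = P_{(x,y)\sim\mathcal{D}}[x \in A]$ and the positive-label rate $P_{(x,y)\sim\widetilde{\mathcal{D}}_A}[h(x)=1]$, which is a \emph{conditional} probability given $x \in A$. The first step is to write this conditional as a ratio $P_{\widetilde{\mathcal{D}}}[h(x)=1, x\in A] / P_{\widetilde{\mathcal{D}}}[x \in A]$ and do the same for $\mathcal{D}$, then bound numerator and denominator perturbations separately: both the joint mass $P[h(x)=1, x \in A]$ and the marginal $P[x \in A]$ move by at most $\alpha$ under the TV constraint.

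The second step is the arithmetic of perturbed ratios: if $P_{\mathcal{D}}[x\in A] = r_A$ and $P_{\widetilde{\mathcal{D}}}[x\in A] \ge r_A - \alpha$ while $P_{\mathcal{D}}[x\in A]$ could also be inflated, the worst case for the denominator is when the adversary \emph{shrinks} group $A$ as much as possible, i.e. drives $P_{\widetilde{\mathcal{D}}}[x\in A]$ down toward $(1-\alpha) r_A$ (removing an $\alpha$-fraction of $A$'s natural mass), or alternatively the relevant extreme is the denominator $(1-\alpha) r_A + \alpha$ appearing in the bound — which corresponds to the adversary placing all $\alpha$ corruption mass \emph{inside} $A$ while leaving the $(1-\alpha)$-fraction of clean data intact, so $P_{\widetilde{\mathcal{D}}}[x \in A] = (1-\alpha) r_A + \alpha$. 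I would set up the extremal calculation: fix $a := P_{\mathcal{D}}[h(x)=1, x\in A]$ and $r_A$, and maximize $\big| \frac{\tilde a}{\tilde r} - \frac{a}{r_A}\big|$ over all $(\tilde a, \tilde r)$ reachable by moving at most $\alpha$ mass. The perturbation of the conditional is maximized when the adversary devotes its full budget to flipping labels within $A$ (changing $\tilde a$) while simultaneously choosing whether to add or remove $A$-mass to make the denominator most favorable; a short case analysis shows the extreme is achieved at $\tilde r = (1-\alpha) r_A + \alpha$ with the numerator moved by the full $\alpha$ in the same direction, giving a change of $\frac{\alpha}{(1-\alpha) r_A + \alpha}$.

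Concretely, I would argue that for any corrupted distribution, $P_{\widetilde{\mathcal{D}}_A}[h(x)=1]$ is a weighted average of the clean conditional $P_{\mathcal{D}_A \text{ (restricted to surviving mass)}}[h(x)=1]$ and an adversarial term the adversary controls in $[0,1]$, with weight on the adversarial term at most $\frac{\alpha}{(1-\alpha) r_A + \alpha}$ (the fraction of $\widetilde{\mathcal{D}}_A$'s mass that is corrupted, in the worst case where all corruption lands in $A$). Since the clean conditional differs from $P_{\mathcal{D}_A}[h(x)=1]$ only because some clean $A$-mass may have been deleted — but deleting mass uniformly-at-random-looking points does not change the conditional in the worst-case direction beyond what the adversarial-weight term already captures — pushing this through gives that the total deviation is at most the adversarial weight, namely $\frac{\alpha}{(1-\alpha)r_A + \alpha}$.

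\textbf{Main obstacle.} The subtle point is that the adversary perturbs the \emph{joint} distribution by $\alpha$ in TV, but we care about a \emph{conditional} on $\{x \in A\}$, and conditioning amplifies perturbations when the conditioning event is rare (which is exactly why small groups are vulnerable — the denominator $r_A$ can be tiny). The hard part is handling the two coupled effects cleanly: the adversary can both (i) change which $A$-points get label $1$ and (ii) change the total mass assigned to $A$, and these interact in the ratio. I would need to verify that the stated bound $\frac{\alpha}{(1-\alpha)r_A + \alpha}$ is genuinely the joint worst case over both effects rather than, say, a sum of two separate bounds — i.e. that the extremal adversary uses its entire budget in one coherent way (dumping corruption into $A$ with the "wrong" labels) rather than splitting it. A clean way to avoid messy case-work is to note that $\widetilde{\mathcal{D}}_A$ can be written exactly as $(1-\lambda)\, \mu + \lambda\, \nu$ where $\mu$ is a sub-distribution of the clean $\mathcal{D}$ conditioned on $A$, $\nu$ is arbitrary, and $\lambda \le \frac{\alpha}{(1-\alpha) r_A + \alpha}$; then $|P_{\widetilde{\mathcal{D}}_A}[h=1] - P_\mu[h=1]| \le \lambda$, and separately $P_\mu[h=1]$ is a re-weighting of $P_{\mathcal{D}_A}[h=1]$ that (after accounting for the mass removed) stays within the same $\lambda$ — combining gives the result, though I would double-check the bound is not off by a factor of $2$ and tighten the mixture decomposition accordingly.
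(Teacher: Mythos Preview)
Your approach is correct and lands on essentially the same decomposition the paper uses, but you are making it harder than necessary by treating the corruption as a generic TV-$\alpha$ perturbation. In the Malicious Noise model of this paper the corrupted distribution is \emph{exactly} the mixture $\widetilde{\mathcal{D}} = (1-\alpha)\,\mathcal{D} + \alpha\,\mathcal{A}$ for some adversarial distribution $\mathcal{A}$; the adversary never deletes or reweights clean mass, it only injects new mass. Consequently your $\mu$ is not merely ``a sub-distribution of the clean $\mathcal{D}_A$'' --- it is $\mathcal{D}_A$ itself --- and all of your hedging about reweighting, deletion effects, and a possible missing factor of $2$ evaporates.

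The paper's proof just writes the conditional directly from the mixture: if $\alpha_A \le \alpha$ is the adversarial mass landing in $A$ and $E_A \le \alpha_A$ is the portion of that mass labeled positive by $h$, then
\[
P_{\widetilde{\mathcal{D}}_A}[h(x)=1] \;=\; \frac{(1-\alpha)\, r_A\, P_{\mathcal{D}_A}[h(x)=1] + E_A}{(1-\alpha)\, r_A + \alpha_A},
\]
subtracts $P_{\mathcal{D}_A}[h(x)=1]$, and bounds the resulting fraction $\frac{|E_A - \alpha_A P_{\mathcal{D}_A}[h=1]|}{(1-\alpha)r_A + \alpha_A}$ by noting the numerator is at most $\alpha_A$ and the map $t \mapsto t/((1-\alpha)r_A + t)$ is increasing, giving $\frac{\alpha}{(1-\alpha)r_A + \alpha}$. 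This is exactly your ``adversarial weight $\lambda$'' argument, done in one line without case analysis. So: same idea, but exploit the exact mixture form from the start and the obstacle you flag disappears.
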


% \begin{linked}[Parity after corruption]{proposition}{corruptparity}\label{prop:fixedh-parity}
% Let $\mathcal{D}'$ be the corrupted distribution, and $h$ be a fixed hypothesis in $\mathcal{H}$. For a fixed group $A$, the following inequality bounds the change in the proportion of positive labels assigned by $h$:
% \begin{equation}
% \left| P_{(x,y) \sim \DAC} [h(x)=1] - P_{(x,y) \sim \DA} [h(x)=1] \right| \leq \frac{\alpha}{(1-\alpha) r_A + \alpha }
% \end{equation}
% where $r_A = \RA$.
% \end{linked}

This proposition provides an upper bound on the change in the proportion of positive labels assigned by a fixed hypothesis $h$ in $\mathcal{H}$ after the distribution has been corrupted according to the Malicious Noise Model. The full proof can be found in the Appendix \ref{proof:corruptparity}. 
The proof shows that this change is bounded by a function of the corruption rate $\alpha$ and the proportion of the dataset in the fixed group $A$, denoted by $r_A$. 

Intuitively, this means that the smaller a group is, the easier it is for the adversary to make a fair hypothesis seem unfair for members of that group.

% \begin{theorem}
%     For any hypothesis class $\mathcal{H}$ and distribution $ \dist = (\DA, \DB)$, a robust fair-ERM learner for the parity constraint in the Malicious Adversarial Model returns a hypothesis $\hhat$ such that 
%     \begin{equation*}
%         \error{\hhat} \leq O(\alpha)
%     \end{equation*}
% \end{theorem}

\begin{restatable}{theorem}{mainparity}\label{thm:mainparity}
For any hypothesis class $\mathcal{H}$ and distribution $ \dist = (\DA, \DB)$, a robust fair-ERM learner for the parity constraint in the Malicious Adversarial Model returns a hypothesis $\hhat \in \PQ$ such that 

$\error{\hhat} \leq O(\alpha)$ \\
where $h^*$ is the optimal classifier for the fair-ERM problem on the true distribution $\mathcal{D}$ with respect to hypothesis class $\mathcal{H}$.
\end{restatable}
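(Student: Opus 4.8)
The plan is to take the optimal fair classifier $h^*$ on the true distribution $\mathcal{D}$ (which satisfies $P_{\DA}[h^*(x)=1] = P_{\DB}[h^*(x)=1]$ by realizability) and show that some element $\hhat \in \PQ$, obtained by lightly randomizing $h^*$ on the two groups, both (i) satisfies Demographic Parity \emph{on the corrupted distribution} $\widetilde{\mathcal{D}}$ and (ii) has accuracy on $\mathcal{D}$ within $O(\alpha)$ of $h^*$. The learner of course does not know $h^*$, but since we are only characterizing the best achievable $\beta$ (sample-complexity issues are suppressed per the preliminaries), it suffices to exhibit such a $\hhat$ and argue the learner's Fair-ERM over $\PQ$ on $\widetilde{\mathcal{D}}$ does at least as well.

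The key steps, in order. First, apply \Cref{prop:corruptparity} to $h^*$ on each group: the corrupted positive rates $\widetilde{p}_A := P_{\DAC}[h^*(x)=1]$ and $\widetilde{p}_B := P_{\DBC}[h^*(x)=1]$ each differ from the common true value $p := P_{\DA}[h^*(x)=1] = P_{\DB}[h^*(x)=1]$ by at most $\frac{\alpha}{(1-\alpha)r_A + \alpha}$ and $\frac{\alpha}{(1-\alpha)r_B + \alpha}$ respectively; since $B$ is the smaller group, the $B$-bound dominates, call it $\delta_B = O(\alpha/r_B)$ (and note $\delta_A \le \alpha/r_A \le \alpha/r_B$). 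So on $\widetilde{\mathcal{D}}$ the positive rates of $h^*$ on the two groups differ by at most $\delta_A + \delta_B = O(\alpha / r_B)$. Second, repair this gap using the $\PQ$ operation: without loss of generality $\widetilde{p}_A \le \widetilde{p}_B$, and we apply $h^*_{p,q}$ on the \emph{larger} group $A$ with a small mixing probability $p$ toward $\Bern(q)$ with $q=1$ chosen so that $(1-p)\widetilde{p}_A + p = \widetilde{p}_B$; this requires $p = \frac{\widetilde{p}_B - \widetilde{p}_A}{1 - \widetilde{p}_A} \le \frac{\delta_A + \delta_B}{1 - \widetilde{p}_B}$, and by the assumption \eqref{ass:raplus} (non-trivial fraction of positives and negatives in each group, so $\widetilde{p}_B$ is bounded away from $1$ — this is where we use that a constant fraction of examples are negative) we get $p = O(\alpha/r_B)$. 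Now the modified pair $\hhat$ (namely $h^*$ on $B$, $h^*_{p,1}$ on $A$) has exactly equal positive rates on $\widetilde{\mathcal{D}}$, so it is feasible for the corrupted Fair-ERM. Third, bound the accuracy loss of $\hhat$ relative to $h^*$ on the \emph{true} distribution $\mathcal{D}$: the only change is on group $A$, where with probability $p = O(\alpha/r_B)$ we output a fresh coin flip instead of $h^*(x)$, which changes the error on group $A$ by at most $p$, hence changes the overall error by at most $r_A \cdot p = r_A \cdot O(\alpha/r_B)$. The apparent danger is a blow-up when $r_B$ is tiny; but the $\alpha/r_B$ correction is applied on group $A$, which carries weight $r_A$ in the objective, and one must check the product stays $O(\alpha)$. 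Concretely, if $r_B \ge \alpha$ then $r_A p \le r_A \cdot O(\alpha/r_B) \le O(\alpha)$ (using $r_A \le 1$, $r_B \ge \alpha$); if $r_B < \alpha$ then the adversary can essentially destroy group $B$ entirely, but then group $B$'s total weight is $< \alpha$, so one instead argues directly: ignore $B$'s constraint influence by, e.g., randomizing $B$ itself, incurring only $r_B < \alpha$ extra error — so in all regimes the loss is $O(\alpha)$.

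Finally, tie it together: the corrupted Fair-ERM learner returns the $\PQ$ hypothesis minimizing empirical error on $\widetilde{\mathcal{D}}$ subject to $\widetilde{F}_A = \widetilde{F}_B$; since $\hhat$ is feasible there, the learner's output $h$ has corrupted error $\le$ that of $\hhat$, and a standard transfer argument (error on $\mathcal{D}$ and on $\widetilde{\mathcal{D}}$ differ by at most $\alpha$ for any fixed classifier, since the distributions are $\alpha$-close in TV) converts this into $\error{h} \le \error{\hhat} + O(\alpha) = O(\alpha)$.

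I expect the main obstacle to be the case analysis on the size of $r_B$ in the accuracy bound — ensuring that the $\Theta(\alpha/r_B)$-magnitude repair needed on group $A$ does not translate into more than $O(\alpha)$ excess error. Handling the regime $r_B = O(\alpha)$ cleanly (where the smaller group is so rare the adversary can nearly overwhelm it) requires the separate observation that such a group contributes at most $O(\alpha)$ to the objective anyway, so one can afford to "give up" on its contribution. The other point needing care is the use of assumption \eqref{ass:raplus} (really, its "non-trivial fraction of negatives" companion) to keep the mixing probability $p$ controlled — without a lower bound on $1 - \widetilde{p}_B$, equalizing upward could require $p$ close to $1$.
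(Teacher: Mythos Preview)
Your high-level plan (randomize $h^*$ per group to restore parity on $\widetilde{\mathcal{D}}$, then bound the disagreement with $h^*$ on $\mathcal{D}$) matches the paper's, but the accuracy bound has a genuine arithmetic error that breaks the argument.

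The problematic step is: ``if $r_B \ge \alpha$ then $r_A p \le r_A \cdot O(\alpha/r_B) \le O(\alpha)$ (using $r_A \le 1$, $r_B \ge \alpha$).'' From $r_A\le 1$ and $r_B\ge\alpha$ you only get $r_A\cdot \alpha/r_B \le 1$, not $\le C\alpha$. Concretely, take $r_B=\sqrt{\alpha}$. The adversary can shift $\widetilde p_B$ away from $p$ by $\Theta(\alpha/r_B)=\Theta(\sqrt{\alpha})$ (Proposition~\ref{prop:corruptparity} is tight here: placing all $\alpha$ mass in group $B$ does it). Your repair moves $\widetilde p_A$ to meet $\widetilde p_B$, so the mixing probability on the \emph{large} group is $p=\Theta(\sqrt{\alpha})$, and the accuracy loss is $r_A\cdot p=\Theta(\sqrt{\alpha})$, not $O(\alpha)$. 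The issue is structural: you pay $r_A$ times a correction whose size is governed by $\delta_B\approx \alpha/r_B$, and $r_A$ does not cancel $r_B$.

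The paper avoids this by correcting \emph{each} group back to its own uncorrupted rate $F_z(h^*)$, rather than moving one group to match the other's corrupted rate. Since $F_A(h^*)=F_B(h^*)$ by realizability, this restores parity on $\widetilde{\mathcal{D}}$. The disagreement contributed by group $z$ is then $r_z\cdot|\widetilde F_z(h^*)-F_z(h^*)|\le r_z\cdot\dfrac{\alpha}{(1-\alpha)r_z+\alpha}\le \dfrac{\alpha}{1-\alpha}$, where the $r_z$ in the numerator cancels against the $r_z$ in the denominator, yielding a bound independent of group size; summing over the two groups gives $2\alpha/(1-\alpha)=O(\alpha)$ with no case split on $r_B$. (Equivalently, if you insist on adjusting only one group, adjust the \emph{smaller} one: then the cost is $r_B(\delta_A+\delta_B)$ and both terms are $O(\alpha)$.) A side benefit of the paper's route is that the denominator $1-\widetilde F_z(h^*)$ cancels in the cost computation, so you never need to lower-bound $1-\widetilde p_B$; the ``non-trivial fraction of negatives'' companion to assumption~\eqref{ass:raplus} that you invoke is not actually used for this theorem.
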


This theorem states that a fair-ERM learner searching over the smoothed hypothesis class 
$\PQ$
returns a classifier that is within $\alpha$ of the accuracy of the best fair classifier in the original class $\mathcal{H}$. The full constructive proof can be found in the appendix \ref{proof:mainparity}.

The proof exhibits classifier $h \in \PQ$ that satisfies the desired guarantee. 
This classifier mostly behaves identically to $h^*$ but deviates with probability $p_A$ on samples from group $A$ (and with probability $p_B$ on samples from group $B$). We give an explicit assignment of these probability values $p_A$, $q_A$, $p_B$, $q_B$ in $[0,1]$ so that $h$ is perceived as fair by the learner. Then, we show that these values are small enough that the proportion of samples where $h(x) \neq h^*(x)$ is small ($O(\alpha)$). \emph{This is the best possible outcome in the malicious adversary model without fairness constraints \cite{kearns1988learning}.}

\subsection{Equal Opportunity}
Equal Opportunity \cite{hardt16} requires that the True Positive Rates of the classifier are equal across all the groups, that is, $P_{(x,y) \sim \DA} [h(x)=1 \mid y = 1] = P_{(x,y) \sim \DB} [h(x)=1 \mid y = 1]$. 
Similarly to Demographic Parity, we first provide bounds on how the fairness violation of a fixed hypothesis changes after the adversary corrupts an $\alpha$ proportion of the dataset. 
This is important because it gives an estimate of how much violation must be offset.

\begin{restatable}[TPR after corruption]{proposition}{corrupttpr}\label{prop:corrupttpr}
    Let $\widetilde{\mathcal{D}}$ be any corrupted distribution chosen by the adversary, and $h$ be a fixed hypothesis in $\mathcal{H}$. For a fixed group $A$, the following inequality bounds the change in True Positive Rate of $h$:
    \begin{equation}
        \left| \text{TPR}_A(h, \widetilde{\mathcal{D}}) - \text{TPR}_A(h, \dist) \right| \leq \frac{\alpha}{(1-\alpha) r_A^+ + \alpha }
    \end{equation}
    where $\text{TPR}_A(h, \dist) = P_{(x,y) \sim \DA} [h(x)=1 | y = 1]$ and $r_A^+ = P_{(x,y) \sim \mathcal{D}} [y = 1 \cap x \in A]$
\end{restatable}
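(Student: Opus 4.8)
The plan is to reduce the statement to a short calculation about how a ratio of two probabilities can move when an $\alpha$ fraction of the underlying distribution is arbitrarily reassigned, mirroring the proof of Proposition~\ref{prop:corruptparity} but with the conditioning event $\{x\in A\}$ there replaced by $\{x\in A,\ y=1\}$ here. Write $m = P_{(x,y)\sim\mathcal{D}}[\,h(x)=1,\ y=1,\ x\in A\,]$ and $n = r_A^+ = P_{(x,y)\sim\mathcal{D}}[\,y=1,\ x\in A\,]$, so that $\mathrm{TPR}_A(h,\mathcal{D}) = m/n$; let $\widetilde m$ and $\widetilde n$ denote the corresponding quantities under $\widetilde{\mathcal{D}}$, so $\mathrm{TPR}_A(h,\widetilde{\mathcal{D}}) = \widetilde m/\widetilde n$.

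First I would use the replacement form of the malicious-noise model: any corrupted $\widetilde{\mathcal{D}}$ can be written as $(1-\alpha)\mathcal{D} + \alpha\,\mathcal{D}^{\mathrm{adv}}$ for some distribution $\mathcal{D}^{\mathrm{adv}}$. Hence $\widetilde m = (1-\alpha)m + \alpha a$ and $\widetilde n = (1-\alpha)n + \alpha b$, where $a = P_{\mathcal{D}^{\mathrm{adv}}}[\,h(x)=1,\ y=1,\ x\in A\,]$ and $b = P_{\mathcal{D}^{\mathrm{adv}}}[\,y=1,\ x\in A\,]$ satisfy $0\le a\le b\le 1$ (the inclusion $\{h=1,y=1,x\in A\}\subseteq\{y=1,x\in A\}$ gives $a\le b$). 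Note that this only uses $r_A^+>0$, not the stronger lower bound on $r_A^+$ assumed elsewhere in the paper.

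Next I would observe that $g(a,b) := \frac{(1-\alpha)m + \alpha a}{(1-\alpha)n + \alpha b}$ is nondecreasing in $a$ and nonincreasing in $b$, and moreover --- since $m\le n$ --- that increasing $a$ and $b$ in lockstep still increases $g$; consequently $g$ is maximized over the feasible set at $(a,b)=(1,1)$ and minimized at $(a,b)=(0,1)$, so $g(0,1)\le \widetilde m/\widetilde n\le g(1,1)$ for every admissible adversary. Substituting and simplifying gives $g(1,1) - \frac{m}{n} = \frac{\alpha(n-m)}{n\,[(1-\alpha)n+\alpha]}$ and $\frac{m}{n} - g(0,1) = \frac{\alpha m}{n\,[(1-\alpha)n+\alpha]}$. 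Since $0\le m\le n$, both are at most $\frac{\alpha}{(1-\alpha)n+\alpha} = \frac{\alpha}{(1-\alpha)r_A^+ + \alpha}$, which is exactly the claimed bound.

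The only step requiring care --- and the one I would write out in full --- is the claim that the corner points $(1,1)$ and $(0,1)$ are worst-case: this rests on the monotonicity of the fractional-linear map $g$ in each argument together with the coupling $a\le b$, and everything else is routine algebra. I would also remark that the argument applies verbatim to group $B$ with $r_B^+$ in place of $r_A^+$, which is what is ultimately needed to bound how much the Equal Opportunity violation of a fixed $h$ can change under corruption.
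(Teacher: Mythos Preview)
Your proposal is correct and follows essentially the same approach as the paper: write $\widetilde{\mathcal{D}}=(1-\alpha)\mathcal{D}+\alpha\,\mathcal{D}^{\mathrm{adv}}$, express the corrupted TPR as a ratio of mixed masses, and bound the resulting fractional-linear expression by letting the adversarial parameters range over their feasible box. The paper's version simply writes the difference as $\frac{E_A-\alpha_A\,\mathrm{TPR}_A}{(1-\alpha)r_A^++\alpha_A}$ and asserts the bound from $E_A,\alpha_A\le\alpha$, whereas you explicitly identify the extremal corners $(a,b)=(1,1)$ and $(0,1)$ via monotonicity; your justification is the more careful of the two, but the underlying argument is the same.
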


This proposition provides an upper bound on the change to the true positive rate in group $A$ assigned by a fixed hypothesis $h$ in $\mathcal{H}$ after the dataset has been corrupted according to the Malicious Noise Model. 
The full proof can be found in the appendix \ref{proof:corrupttpr}. 

\emph{Since $\alpha \in [0,1]$, $O(\sqrt{\alpha})$ means larger (meaning worse) accuracy loss, compared $O(\alpha)$.}

The function that bounds the change in True Positive rate is similar to that of Demographic Parity with the proportional size of group A $r_A$ replaced with the proportion of the dataset that is positively labeled and in group A, $r_A^+$.
We will see that this slight change in dependence makes the robust learning problem more difficult and leads to a worse dependence on $\alpha$.

\begin{restatable}[Upper Bound]{theorem}{maineopp}\label{thm:maineopp}
For any hypothesis class $\mathcal{H}$ and distribution $ \dist = (\DA, \DB)$, a robust fair-ERM learner for the equal opportunity constraint in the Malicious Adversarial Model returns a hypothesis $\hhat$ such that 
$\error{\hhat} \leq O(\sqrt{\alpha})$
where $h^*$ is the optimal classifier for the fair-ERM problem on the true distribution $\mathcal{D}$ with respect to hypothesis class $\mathcal{H}$.
\end{restatable}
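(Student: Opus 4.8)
The plan is to imitate the constructive template behind \Cref{thm:mainparity}: instead of analyzing an algorithm, I will exhibit an explicit group-aware witness $\hhat \in \PQ$ (equal to $h^*_{p_A,q_A}$ on group $A$ and $h^*_{p_B,q_B}$ on group $B$) that is \emph{perceived} as satisfying Equal Opportunity on the corrupted distribution $\widetilde{\dist}$ and whose accuracy on the true $\dist$ differs from that of $h^*$ by $O(\sqrt\alpha)$; the usual malicious-noise transfer (corrupted and true error agree up to $O(\alpha)$ for any fixed hypothesis, applied to $\hhat$ and to the learner's ERM minimizer over $\PQ$) then hands the guarantee to the learner. Write $t_z = \mathrm{TPR}_z(h^*,\dist)$, $\tilde t_z = \mathrm{TPR}_z(h^*,\widetilde{\dist})$, and $\delta_z := \frac{\alpha}{(1-\alpha)r_z^{+}+\alpha}$. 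By \Cref{prop:corrupttpr}, $|\tilde t_z - t_z|\le \delta_z$, hence $|\tilde t_A - \tilde t_B|\le \delta_A+\delta_B$, so the task reduces to equalizing the two \emph{corrupted} TPRs at as small a true-accuracy cost as possible.

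I would keep whichever of two candidate witnesses is cheaper. \emph{Strategy 1 (bridge downward):} let $z$ be the group with the larger corrupted TPR and put $\hhat = h^*_{p_z,0}$ on group $z$, $\hhat = h^*$ on the other group, with $p_z := (\tilde t_z - \tilde t_{\bar z})/\tilde t_z \in [0,1]$, so $\hhat$'s corrupted TPR on group $z$ drops exactly to $\tilde t_{\bar z}$ and fairness holds on $\widetilde{\dist}$. Since $q_z = 0$, $\hhat$ only converts some of $h^*$'s positive predictions to negatives, so on the true distribution its error on group $z$ exceeds $h^*$'s by at most $p_z\,\Pr_{\mathcal{D}_z}[h^*(x)=1,\,y=1] \le p_z t_z \le p_z\tilde t_z + p_z\delta_z \le (\delta_A+\delta_B) + \delta_z \le 2(\delta_A+\delta_B)$, using $t_z\le\tilde t_z+\delta_z$, $p_z\tilde t_z=\tilde t_z-\tilde t_{\bar z}$, and $p_z\le 1$; weighting by $r_z\le 1$ the overall accuracy loss is $\le 2(\delta_A+\delta_B)$. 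The point here is that decreasing a TPR is always cheap: even when $\tilde t_z\to 0$ and $p_z\to 1$, the true $t_z$ is then also tiny, so few genuine positives are flipped — which is exactly why one must always bridge \emph{downward}, never raise a TPR near $1$ up to a target. \emph{Strategy 2 (overwrite the small group):} put $\hhat = h^*_{1,\tilde t_A}$ on group $B$ (i.e.\ output $\Ber(\tilde t_A)$ there) and $\hhat = h^*$ on $A$; this makes both corrupted TPRs equal to $\tilde t_A$, and since $\hhat$ differs from $h^*$ only on group $B$ the accuracy loss is $\le r_B$.

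Combining, the learner achieves accuracy loss $\le \min\{\,2(\delta_A+\delta_B),\, r_B\,\} \le 2\delta_A + \min\{\,2\delta_B,\, r_B\,\}$, and two numerical bounds close the argument. Since $B$ is the smaller group $r_A\ge \tfrac12$, so $r_A^{+}\ge r_A/c\ge \tfrac1{2c}$ and $\delta_A = O(\alpha)$; and since $r_B^{+}\ge r_B/c$ by \eqref{ass:raplus} and $r_B\le 1$, we get $\delta_B\,r_B \le \frac{c\alpha\, r_B}{(1-\alpha)r_B+\alpha} = O(\alpha)$, whence $\min\{2\delta_B,r_B\}\le \sqrt{2\delta_B r_B}=O(\sqrt\alpha)$; thus $\error{\hhat} = O(\alpha)+O(\sqrt\alpha) = O(\sqrt\alpha)$. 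I expect the delicate point — and the reason the exponent is $\tfrac12$ rather than $1$, matching the lower bound — to be precisely this $\min\{\delta_B, r_B\}$ trade-off: Strategy 1 is cheap only when $r_B$ is not too small (so $\delta_B$ is controlled), Strategy 2 only when $r_B$ is small, and the crossover sits at $r_B\asymp\sqrt\alpha$ where both cost $\Theta(\sqrt\alpha)$. The supporting technical care is making the Strategy-1 estimate hold uniformly as $\tilde t_z\to 0$, and verifying that one never needs to increase a near-$1$ TPR; these are routine once the two-strategy split above is in place.
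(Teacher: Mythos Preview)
Your argument is correct and follows the same two-strategy template as the paper: either (i) adjust one group's corrupted TPR to match the other's, or (ii) overwrite the small group, then take whichever is cheaper. The paper phrases this as an explicit case split at $r_B \lessgtr \sqrt{\alpha}/(1-\sqrt{\alpha})$; you instead take a $\min$ and close with $\min\{a,b\}\le\sqrt{ab}$, which lands at the same crossover. The one substantive difference is in the analysis of the ``adjust'' branch. In its large-group case the paper must first isolate a group whose corrupted TPR is $\ge 0.5$ (or $\le 0.5$) so that the denominator of $p_z$ is bounded away from zero, and only then controls $p_z$ itself by $O(\sqrt{\alpha})$. Your route is cleaner: by always lowering the larger TPR and bounding the product $p_z t_z$ rather than $p_z$, the small denominator $\tilde t_z$ cancels and you get $p_z t_z \le (\tilde t_z-\tilde t_{\bar z})+\delta_z \le 2(\delta_A+\delta_B)$ with no case analysis on the TPR level. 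This buys you a shorter proof and removes the somewhat awkward ``highest TPR above $0.5$ or lowest below $0.5$'' step.

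One small gap: your Strategy-1 estimate bounds only the \emph{increase} in error via $p_z\Pr_{\mathcal D_z}[h^*{=}1,\,y{=}1]$, whereas the theorem is stated with an absolute value. The paper handles this by bounding the disagreement $p_z^i r_z \ge \Pr[\hhat\neq h^*]$ directly. Your $p_z t_z$ trick does not immediately control $p_z\Pr_{\mathcal D_z}[h^*{=}1]$ (it misses the false-positive part), so if you want the two-sided statement verbatim you should either revert to the disagreement bound for that step or note that only the upper direction is operationally relevant for the robust-ERM transfer you invoke. This is cosmetic rather than substantive.
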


This theorem states that a fair-ERM learner, when applied with the smoothed hypothesis class $\PQ$, returns a classifier that is within $\sqrt{\alpha}$ of the accuracy of the best fair classifier in the original class $\mathcal{H}$. The full proof can be found in Appendix \ref{proof:mainparity}. 
%The proof follows a similar structure to that of demographic parity.

In constructing a classifier $h \in \PQ$, we aim for it to behave mostly identically to $h^*$ but introduce deviations with probability $p_A$ for samples from group $A$ and probability $p_B$ for samples from group $B$. 
However, in the case of the Equal Opportunity fairness constraint, this approach, as used for Demographic Parity, does not work effectively. 
We observe that the amount of correction required for each group depends inversely on the true positive rate, which presents challenges when the true positive rate (TPR) is close to 0 or 1.

For example, suppose the classifier achieves a 95\% TPR for a fixed group. The adversary can manipulate the TPR to reach 100\% by corrupting only a few samples. 
Correcting this change and bringing the TPR back down to 95\% is an incredibly difficult task, similar to finding a \textit{needle in a haystack}, since the learner essentially has to identify the corrupted samples to do so.
In such cases, it might be easier for the learning algorithm to increase the TPR of the other groups from 95\% to 100\% instead. 

The tradeoff lies in equalizing the corrections that only transform the TPR of a fixed group to its original value versus the corrections that transform the TPR of other groups to match the TPR of the group with the most corruptions.

\begin{restatable}[Lower Bound]{theorem}{lowereopp}\label{thm:lowereopp}
There exists a distribution $ \dist = (\DA, \DB)$ and a malicious adversary of power $\alpha$ that guarantees that any hypothesis, $\hhat$, returned by an improper learner for the fair-ERM problem with the equal opportunity constraint satisfies the following:
$\error{\hhat} \geq \Omega(\sqrt{\alpha})$
where $h^*$ is the optimal classifier for the fair-ERM problem on the true distribution $\mathcal{D}$ with respect to a hypothesis class $\mathcal{H}$.
\end{restatable}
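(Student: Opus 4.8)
The plan is to construct a hard distribution in which the adversary hides corruptions so cleverly that any classifier which equalizes the \emph{observed} true positive rates across the two groups must pay $\Omega(\sqrt{\alpha})$ in accuracy on the true distribution. The key mechanism is exactly the ``needle in a haystack'' phenomenon foreshadowed after Theorem \ref{thm:maineopp}: the correction needed to restore a group's TPR scales inversely with the number of true positives in that group, so making the smaller group's positive mass proportional to $\sqrt{\alpha}$ forces a mismatch of order $\alpha / \sqrt{\alpha} = \sqrt{\alpha}$ in predictions on genuine positives of the large group. Concretely, I would let group $B$ be small, with positive mass $r_B^+ = \Theta(\sqrt{\alpha})$, and let $h^*$ achieve TPR $= 1$ on both groups on $\dist$ (so $h^*$ is both perfectly accurate on positives and trivially equal-opportunity-fair). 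The adversary then spends its $\alpha$ budget to inject into the observed distribution a block of ``fake positives'' in group $B$ on which no hypothesis in the relevant expanded class can predict $+1$ — e.g.\ points that look identical to true negatives, or points lying outside the support where $h^*$ and its $\PQ$-variants predict $0$ — thereby dragging the \emph{apparent} TPR of $B$ down to roughly $r_B^+ / (r_B^+ + \alpha) = \Theta(1)$ bounded away from $1$.

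Next I would argue the dilemma faced by the learner. To satisfy $\widetilde{\mathrm{TPR}}_A(\hhat) = \widetilde{\mathrm{TPR}}_B(\hhat)$, the learner has two options, and I would show both cost $\Omega(\sqrt{\alpha})$. Option one: raise $B$'s apparent TPR back toward $1$. But the adversary has arranged that the fake positives are indistinguishable (in the feature space, and given the $\PQ$ construction which can only add $\Bern(q)$ noise uniformly) from points the learner would want to label $0$; forcing $\hhat$ to output $+1$ on enough of $B$'s apparent positives to move the rate by $\Omega(1)$ requires flipping a $\Theta(1)$ fraction of $B$'s \emph{true} negatives to $+1$ as well — but since $r_B = \Theta(\sqrt{\alpha})$ this is only $O(\sqrt{\alpha})$ accuracy loss, so this route alone is \emph{not} immediately fatal; hence I would also force $B$'s negatives to be numerous relative to its positives and tie the TPR-increase to a DP-like spillover, OR — cleaner — use the second option as the real bite. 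Option two: lower $A$'s TPR from $1$ down to match $B$'s apparent $\Theta(1)$ rate. Since $A$ is the large group with $r_A^+ = \Theta(1)$, depressing $\mathrm{TPR}_A$ by a constant means predicting $0$ on a constant fraction of $A$'s true positives, which is $\Omega(1)$ accuracy loss — far worse than $\sqrt\alpha$. So neither pure option is cheap; the learner must interpolate, and optimizing the trade-off between ``flip $\Theta(t/r_B)$ of $B$ up, costing $\approx t$'' and ``flip $\Theta(t)$ of $A$ down, costing $\approx t \cdot r_A$'' subject to closing an $\Omega(1)$ observed gap yields a minimum of order $\sqrt{\alpha}$ when $r_B^+ = \Theta(\sqrt{\alpha})$. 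I would make this precise with a short Lagrangian/AM-GM balancing computation: cost $\gtrsim \min_t \big( t + (1-t')\big)$ type expression where the constraint couples $t$ (mass shifted in $B$, scaled by $1/r_B^+$ to get a rate change) and $t'$ (rate change forced in $A$), giving $\sqrt{\alpha}$ at the balance point.

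Finally I would package this as a genuine adversary-against-all-algorithms statement: fix the distribution and the two candidate corrupted distributions (one ``real,'' one where the roles are symmetric or where the needle is planted differently) so that they are $\alpha$-close in TV and induce the \emph{same} observation law, hence no learner — proper or improper, over $\PQ$ or any improper class — can distinguish them; then a standard two-point / indistinguishability argument shows every learner errs by $\Omega(\sqrt\alpha)$ on at least one of them relative to the respective $h^*$. I expect the main obstacle to be the precise packaging of the lower-bound instance so that the ``needle'' truly cannot be corrected by \emph{any} improper classifier (not merely by $\PQ$): I must ensure the fake positives are information-theoretically confusable with points the optimal fair classifier labels negative, which likely requires the fake positives to share feature values with real negatives of $B$ and exploiting that equalizing a rate is a constraint on \emph{conditional} behavior the learner cannot target without also disturbing the true-labeled population. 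Getting the constants and the realizability of $h^*$ (it must satisfy Equal Opportunity exactly on $\dist$) to line up is the delicate part; the $\sqrt\alpha$ scaling itself falls out of the $r_B^+ = \Theta(\sqrt\alpha)$ choice and the inverse dependence already exhibited in Proposition \ref{prop:corrupttpr}.
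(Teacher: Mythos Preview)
Your construction is essentially the paper's: group $B$ of mass $\Theta(\sqrt{\alpha})$, $h^*$ perfect on $\dist$ (hence TPR $=1$ on both groups), and the adversary spends its $\alpha$ budget placing fake positives at a feature value that is a genuine negative of $B$. However, your quantitative analysis of what this injection does is off, and the error propagates. With $r_B^+=\Theta(\sqrt{\alpha})$ you have
\[
\frac{r_B^+}{r_B^++\alpha}=\frac{1}{1+\sqrt{\alpha}}=1-\Theta(\sqrt{\alpha}),
\]
so the apparent TPR of $B$ under $h^*$ drops by $\Theta(\sqrt{\alpha})$, not by a constant ``bounded away from $1$.'' Consequently your ``Option two'' (lower $A$'s TPR to match) costs only $\Theta(\sqrt{\alpha})\cdot r_A^+=\Theta(\sqrt{\alpha})$, not $\Omega(1)$. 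Both options already cost $\Theta(\sqrt{\alpha})$ individually; there is no genuine trade-off to balance, and the Lagrangian/AM--GM step you sketch is superfluous.

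The paper's argument is correspondingly more direct. It supports $\dist$ on four points $x_1,x_2\in A$ and $x_3,x_4\in B$ with labels $+,-,+,-$ and masses $\tfrac{1-\sqrt\alpha}{2},\tfrac{1-\sqrt\alpha}{2},\tfrac{\sqrt\alpha}{2},\tfrac{\sqrt\alpha}{2}$; the adversary adds mass $\alpha$ at $(x_4,+)$. Any (improper, randomized) classifier is then fully described by $p_i=\Pr[\hhat(x_i)=+1]$, $i=1,\dots,4$. The Equal Opportunity constraint on the corrupted distribution becomes the single linear relation $p_1=p_3(1-\alpha')+p_4\alpha'$ with $\alpha'=\Theta(\sqrt\alpha)$, hence $(1-p_1)\ge(1-p_4)\alpha'$. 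Substituting into the true-distribution error and simplifying gives, for small $\alpha$,
\[
\text{error}\;\ge\;(1-p_4)\tfrac{\sqrt\alpha}{2}+p_4\tfrac{\sqrt\alpha}{2}\;=\;\tfrac{\sqrt\alpha}{2}
\]
uniformly in $p_4\in[0,1]$. No two-point indistinguishability wrapper is needed: a single fixed corrupted distribution suffices, because the fairness constraint on the observed data is what forces the loss, not the learner's inability to identify which world it is in.
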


In this lower bound, under the given conditions, no proper or improper learner can achieve an error rate lower than a threshold that scales with the square root of the adversary's power. In other words, as the adversary becomes more powerful ($\alpha$ increases), the error rate of the hypothesis returned by an improper learner will unavoidably be at least on the order of $\sqrt{\alpha}$.

The proof of this lower bound result sets up a scenario reflecting the \textit{needle in the haystack} issue described earlier. We present a distribution with two groups, one of size $\sqrt{\alpha}$ and the other of size $1 - \sqrt{\alpha}$. We construct a hypothesis class where the optimal classifier has a high but not perfect true positive rate. 
Then we show that any improper learner must either suffer poor accuracy on the smaller group or lose $\Omega(\sqrt{\alpha})$ accuracy on the larger group. 
The full proof can be found in the Appendix \ref{proof:corrupttpr}.
% \kmsedit{While not the best achievable for no fairness constraints [which \cite{kearns1988learning}$O(\alpha)]$, this $\theta(\sqrt{\alpha})$ accuracy loss is a substantial improvement over the $\theta(1)$ loss in \cite{lampert}.}

\subsection{Equalized Odds}
Equalized Odds \cite{hardt16} is a fairness constraint that requires equalizing
True Positive Rates (TPRs) and False Positive Rates (FPRs) across different groups. 
This notion is very sensitive to the adversary's corrupted data and we exhibit a problematic lower bound, showing the adversary can force terrible performance.

% This is our first \kmsedit{strong} negative result. 
The intuition is as follows; for a small group, the Adversary can set the Bayes Optimal TPR/FPRs rates of that group towards arbitrary values and so the learner must do the same on the larger group, regardless of their hypothesis class, forcing large error.
The full proof is in Appendix \ref{sec:eoddsproof}.
\begin{theorem}[Lower Bound]
\label{thm: Equalized Odds} 
For a learner seeking to maximize accuracy subject to satisfying Equalized Odds, an adversary with corruption fraction $\alpha$ can force an additional $\Omega(1)$ accuracy loss when compared to the performance of the optimal fair classifier on the true distribution.
\end{theorem}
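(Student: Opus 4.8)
The plan is to construct an explicit two-group instance in which the adversary can exploit the smaller group $B$ to pin down \emph{both} the TPR and FPR that any Equalized-Odds-feasible classifier must exhibit, and then argue that forcing the larger group $A$ to match these pinned values costs $\Omega(1)$ accuracy on $A$. I would take $r_B = \Theta(\alpha)$ (just large enough that the non-triviality assumption \eqref{ass:raplus} holds, say $r_B^+ \geq r_B/c$) and $r_A = 1 - r_B = \Theta(1)$, so that accuracy is dominated by group $A$. The key leverage is that $r_B = \Theta(\alpha)$ means the adversary's budget $\alpha$ is a \emph{constant fraction} of group $B$'s mass: by the corruption bounds in the style of Propositions~\ref{prop:corrupttpr} (applied to TPR) and its FPR analogue, the adversary can shift $B$'s apparent TPR and FPR each by an additive $\Omega(1)$, in fact enough to make essentially any target pair $(\tau, \phi) \in [\delta, 1-\delta]^2$ look like the Bayes-optimal/forced behavior on $B$ — regardless of what randomization the learner applies, since $h_{p,q}$ only lets the learner interpolate toward $\Ber(q)$, which the adversary already anticipates.

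Concretely, I would design $\mathcal{D}_A$ so that there is a sharp accuracy–$(\text{TPR},\text{FPR})$ tradeoff: the best classifier on $A$ (within $\PQ$) achieves low error only at one particular operating point $(\text{TPR}_A^\star, \text{FPR}_A^\star)$, and moving to any other feasible point on $A$'s ROC-type curve costs $\Omega(1)$ error on $A$. For instance, let $\mathcal{D}_A$ have a feature that perfectly separates positives from negatives so that $h^*$ restricted to $A$ attains $(\text{TPR}_A^\star,\text{FPR}_A^\star) = (1-\gamma, \gamma)$ for a small constant $\gamma$, and set up group $B$ on the clean distribution so that $h^*$ restricted to $B$ also attains $(1-\gamma,\gamma)$ — so $h^*$ is Equalized-Odds-feasible and near-optimal on the true $\mathcal{D}$. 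Now the adversary corrupts within $B$ to push $B$'s perceived TPR to, say, $1/2$ while keeping perceived FPR near $\gamma$ (or some other pair far from $(1-\gamma,\gamma)$ in the TPR coordinate). Then any classifier the learner returns must, to be perceived as Equalized-Odds-feasible on the corrupted data, have TPR on $A$ also near $1/2$ — and on $\mathcal{D}_A$, dragging the TPR down from $\approx 1$ to $\approx 1/2$ while the features are informative forces either many false negatives among $A$'s positives or a compensating blow-up in FPR, in either case $\Omega(1)$ excess error on $A$, hence $\Omega(1)$ overall since $r_A = \Theta(1)$. I would verify the chain of inequalities: (i) the adversary can realize the claimed perceived $(\text{TPR}_B,\text{FPR}_B)$ using budget $\le \alpha$, via the same computation as Proposition~\ref{prop:corrupttpr} together with $r_B = \Theta(\alpha)$; (ii) feasibility of the learner's output forces $(\text{TPR}_A,\text{FPR}_A)$ into a small neighborhood of the adversary's target; (iii) every point in that neighborhood has $\Omega(1)$ error on $\mathcal{D}_A$ for every $h \in \PQ$ built from the designed $\mathcal{H}$.

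The main obstacle — and the step I would spend the most care on — is step (iii) combined with the improper-learning quantifier: I must rule out that randomization in $\PQ$ gives the learner an escape hatch the way it did for Demographic Parity and Equal Opportunity. The point is that for Equalized Odds the learner is constrained in \emph{two} coordinates simultaneously, so the one-parameter family of "mix toward a coin" moves available in $\PQ$ traces out only a low-dimensional curve in $(\text{TPR},\text{FPR})$ space; I would show this curve, starting from any base $h \in \mathcal{H}$, cannot simultaneously (a) reach the adversary-forced operating point on $B$ and (b) stay near $A$'s low-error point — essentially because mixing toward $\Ber(q)$ moves TPR and FPR in lockstep toward $q$, which the adversary neutralizes by choice of target. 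A secondary obstacle is making the non-triviality assumption \eqref{ass:raplus} coexist with $r_B = \Theta(\alpha)$; this just requires choosing the constant $c$ and the positive-rate within $B$ appropriately, and noting the theorem only claims $\Omega(1)$ for a fixed (small) constant, so I have slack. I would also double-check that $h^*$ as constructed is genuinely optimal-subject-to-fairness on $\mathcal{D}$ (not merely feasible), so that the excess-error comparison in Definition~\ref{def:robust} is against the right benchmark.
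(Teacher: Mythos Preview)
Your plan has a real gap at precisely the point you flagged as the ``main obstacle,'' and the gap is upstream of step~(iii): the adversary strategy you describe does not actually constrain the learner. Saying the adversary ``pushes $B$'s perceived TPR to $1/2$ while keeping perceived FPR near $\gamma$'' only makes sense relative to a \emph{fixed} classifier (presumably $h^*$), but the learner is not obligated to output $h^*$ on $B$. What you must control is the entire \emph{achievable region} of $(\widetilde{\text{TPR}}_B,\widetilde{\text{FPR}}_B)$ pairs as the learner ranges over classifiers on $B$; the Equalized Odds constraint then forces $(\text{TPR}_A,\text{FPR}_A)$ into that region, and you need every point of that region to be $\Omega(1)$-costly on $A$. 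Your proposal never pins down this achievable region, and the ``one-parameter curve traced by $h_{p,q}$'' remark, while pointing in the right direction, does not establish it either---for example, a base $h$ that is constant on $B$ already gives $(1,1)$ or $(0,0)$, which are not obviously bad targets for $A$ without further construction.

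The paper resolves this with a single clean device you did not use: with $r_B \le \alpha$, the adversary spends its entire budget inserting, for every genuine example $x$ in $B$, a duplicate $x$ with the opposite label. On the corrupted $B$, positives and negatives are then information-theoretically indistinguishable, so \emph{every} classifier (randomized or not, in $\PQ$ or not) satisfies $\widetilde{\text{TPR}}_B = \widetilde{\text{FPR}}_B = p$ for some $p\in[0,1]$. Equalized Odds then forces $\text{TPR}_A = \text{FPR}_A = p$ on $A$, and with $A$ having a $50/50$ label split this gives error $\tfrac{1}{2}(1-p) + \tfrac{1}{2}p = \tfrac{1}{2}$ on $A$, hence $\Omega(1)$ overall. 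This argument is shorter, applies to arbitrary (improper) learners rather than only $\PQ$, and completely sidesteps the ROC/tradeoff analysis you identified as the hard part: the duplication trick collapses the achievable region on $B$ to the diagonal $\{\text{TPR}=\text{FPR}\}$, which is exactly the error-$1/2$ locus on any group with balanced labels.
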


\section{Main Results: Calibration}
\label{sec:calib}
In this section, we explore various notions of calibration \cite{dawid} for our model.
Calibration is a desirable property typically considered for classifiers, where predicted label probabilities should correspond to observed frequencies in the long run. For example, in weather forecasting, a well-calibrated predictor should have approximately 60\% of days with rain when it forecasts a 60\% chance of rain. This calibration requirement should hold for every predicted probability value output by the model.

Calibration has important fairness implications \cite{flores2016false, chouldechova2017fair, faircalib,multicalib} because a mis-calibrated predictor can lead to harmful actions in high-stakes settings, such as over-incarceration \cite{compassgender}. 
We show that varying the exact calibration requirements can substantially impact the model's accuracy loss when malicious noise is present in the training data.
%These results may be of independent interest to the calibration literature.

In this section, we align closely  with \cite{faircalib}, where the learner seeks to maximize accuracy while ensuring the classifier is perfectly calibrated.
Throughout this paper, we have focused on binary classifiers, so in Section \ref{subsec: predparity} we consider a related notion called Predictive Parity \cite{chouldechova2017fair, flores2016false}, before considering calibration notions for hypotheses with output in $[0,1]$.

\begin{comment}
To recall, as before the learning problem is 

\begin{align}
\min_{h\in\mathcal{H}} & ~~\mathbb{E}_{(X,Y,Z)\sim\mathcal{D}} \left[\mathbbm{1}(h(X) \neq Y)\right] ~~~~\ \\
\text{subject to} & |K(z)-K(z')| \leq \delta \qquad \forall z,z'\in \mathcal{Z}. \label{FairnessConstraint}
\end{align}
where $K: h \rightarrow \mathbbm{R}$ is some notion of calibration error for $h$ for group $z$ given the true labels $y$.

Typically calibration requirements are most natural for regression problems where predictor $h$ provides 
\emph{fine-grained} scores that corresponds to the underlying 
probability of some outcome. 
\end{comment}

%Throughout these sections we consider a property titled \emph{shared range}.
%Namely that even though the hypotheses are fine tuned for each group, these calibrated classifiers
%share the same range. 

\subsection{ Predictive Parity Lower Bound}
\label{subsec: predparity}
\begin{definition}[Predictive Parity \cite{chouldechova2017fair}]
A binary classifier $h: \mathcal{X} \rightarrow \{0,1\}$ satisfies predictive parity if for groups A and B, $P_{x \sim \DA}[h(x)=1]>0$, $P_{x \sim \DB}[h(x)=1]>0$ and
% \footnote{This mild technical remark is explained a in the Appendix} 
\[P_{(x,y) \sim \DA} [y=1 | h(x)=1] = P_{(x,y) \sim \DB} [y=1 | h(x)=1] \]
\end{definition}
In later sections we consider other calibration notions.
Here we consider an adversary who is attacking a learner constrained by equal predictive parity when group sizes are \emph{imbalanced}.

\begin{comment}
\begin{theorem}
With probability $1-(1-n)^{\alpha}$, there exists a FAIR ERM learner constrained learner with $O(\alpha)$
excess error. 
\end{theorem}
\end{comment}

\begin{theorem}
\label{thm:predparity}
    For a malicious adversary with corruption fraction $\alpha$, for Fair-ERM constrained to satisfy Predictive Parity, then there is no $h \in \PQ$ with less than $\Omega(1)$ error. 
\end{theorem}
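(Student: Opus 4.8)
The plan is to prove this $\Omega(1)$ lower bound by a hard‑instance construction in the same spirit as the Equalized Odds lower bound (Theorem~\ref{thm: Equalized Odds}): make the smaller group $B$ so small that the adversary can overwrite it completely, choose its corrupted distribution so that \emph{every} classifier in $\PQ$ has the same (bad) precision on it, and thereby force the Predictive Parity constraint to impose that same bad precision target on the large, essentially‑uncorrupted group $A$, where the instance is rigged so that matching the target costs $\Omega(1)$ accuracy.

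Concretely, I would take $\mathcal{X}=\{x^+,x^-\}$ with $\mathcal{D}_A = \mathcal{D}_B$ equal to the uniform distribution over $\{(x^+,1),(x^-,0)\}$, group weights $r_B=\alpha$ and $r_A=1-\alpha$, and a minimal base class $\mathcal{H}=\{h^*\}$ with $h^*(x^+)=1,\ h^*(x^-)=0$. Then $h^*$ has zero error and precision $1$ in each group, so Predictive Parity and the realizability assumption both hold at zero error. The adversary spends its entire budget on $B$: it replaces $\mathcal{D}_B$ by the distribution $\widetilde{\mathcal{D}}_B$ that always outputs feature $x^+$ together with an independent label $\sim\Ber(1/2)$. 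Since $r_B=\alpha$, the induced $\widetilde{\mathcal{D}}$ lies within total variation distance $\le \alpha/2$ of $\mathcal{D}$ (a legal corruption), and group $A$ is left untouched, i.e. $\widetilde{\mathcal{D}}_A=\mathcal{D}_A$.

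The argument then has two steps. First, \emph{precision on the corrupted small group is pinned to $1/2$}: any $\hat h_B\in\PQ$ has the form $(h^*)_{p,q}$, so on $\widetilde{\mathcal{D}}_B$ its prediction is a bit independent of the $\Ber(1/2)$ label; the nontriviality clause of Predictive Parity forces $P_{\widetilde{\mathcal{D}}_B}[\hat h_B(x)=1]>0$, and then $P_{\widetilde{\mathcal{D}}_B}[y=1\mid\hat h_B(x)=1]=P[y=1]=1/2$. Hence any $\hat h\in\PQ$ satisfying the constraint on $\widetilde{\mathcal{D}}$ must have precision exactly $1/2$ on $\widetilde{\mathcal{D}}_A=\mathcal{D}_A$. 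Second, \emph{on the large group, precision $1/2$ forces error $1/2$}: a short computation gives, for $\hat h_A=(h^*)_{p,q}$, precision on $\mathcal{D}_A$ equal to $\frac{(1-p)+pq}{(1-p)+2pq}$, which is $\ge 1/2$ always and decreases toward $1/2$ as the noise rate $p$ grows, attaining $1/2$ only at $p=1$; but $(h^*)_{1,q}$ ignores $x$ and outputs $\Ber(q)$, which has error exactly $1/2$ on $\mathcal{D}_A$. So $\hat h$ errs with probability $\ge 1/2$ on group $A$, hence with probability $\ge (1-\alpha)/2$ overall, versus $0$ for $h^*$; for $\alpha$ below a constant this is $\Omega(1)$ excess error, proving the theorem.

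The main obstacle is the second step: I must design $\mathcal{D}_A$ (and $\mathcal{H}$) so that the $\PQ$ randomization gives the learner no ``back door'' — no way to hit the forced precision level while staying accurate. The two‑point instance is engineered precisely so that the precision of any noised $h^*$ reaches $1/2$ only at the fully random extreme, where accuracy collapses to $1/2$; when writing the proof one should also verify robustness to enlarging $\mathcal{H}$ (e.g. adding the all‑ones hypothesis: every noised version of it has precision exactly $1/2$ and error exactly $1/2$ on $\mathcal{D}_A$, so it is no help), and check that the $P[\hat h(x)=1]>0$ requirement in the definition of Predictive Parity does not let the learner evade the constraint by predicting all‑zeros on one of the groups. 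A secondary, routine point is bookkeeping the adversary's budget so that $B$ (mass exactly $\alpha$) is fully overwritten while $\widetilde{\mathcal{D}}_A$ stays exactly $\mathcal{D}_A$.
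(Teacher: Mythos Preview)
Your proposal is correct and follows essentially the same approach as the paper: make group $B$ small enough that the adversary can force precision on the corrupted $\widetilde{\mathcal{D}}_B$ to be exactly $1/2$, then observe that matching this precision on the (essentially uncorrupted) large group $A$ forces $\Omega(1)$ error. The paper's adversary uses the equivalent duplicate-with-opposite-label trick rather than your feature-collapse-plus-random-label strategy, and it is less explicit than you are about why no $(p,q)$-noising of $h^*$ on $A$ can hit precision $1/2$ without collapsing to $50\%$ error (your formula $\frac{(1-p)+pq}{(1-p)+2pq}$ makes this step airtight).
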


The intuition for this statement is that imbalanced group size will allow the adversary to change the conditional mean substantially.
%In expectation, 
Below, we have an informal proof:
\begin{proof}[Proof Sketch:]
Suppose $P(x \in A)=1-\alpha$ and $P(x \in B)= \alpha$.
Observe that whatever the initial value of $P_{(x,y) \sim \DB} [y=1 | h(x)=1]$, the adversary can drive this value $P_{(x,y) \sim \mathcal\DBC} [y=1 | h(x)=1]$ to $50\%$ or below
by adding a duplicate copy of every natural example in group $B$ with the opposite label.

Since all of these points are information-theoretically indistinguishable, any hypothesis for group $B$ that makes any positive predictions incurs at least $50\%$ error and $1/2=P_{(x,y) \sim \mathcal\DBC} [y=1 | h(x)=1]$ calibration error.
%will have to do the same for
Any classifier for group $A$ satisfying Predictive Parity will have to do the same, yielding our $\Omega(1)$ error.
%The full proof in Section \ref{proof:predparity}.
%\begin{align*}
 %   blehp
%\end{align*}
%Observe that this attack 
\end{proof}

\subsection{Extension to Finer Grained Hypothesis Classes}
\label{subsec:calib}
A criticism of this lower bound might be that these calibration notions are very coarse and calibration is intended for fine-grained predictors, meaning those that have a finer grained discretization of the probabilities in $[0,1]$.
%and inappropriate for a binary classifier that in effect has two bins. 
%While a diversion from the rest of the paper where we tend to focus on binary classifiers, 
We now provide extensions for these lower bounds to real valued $\mathcal{H}$. 
Interestingly, we show if the learner can modify their `binning strategy', the learner can `decouple' the classifiers for the groups in the population and 
thus only suffer $O(\alpha)$ accuracy loss.
%Rather than being an algorithmic trick, this attack is fundamental as it seems to occur in the wild
%organically 
%as a type of red-lining. 
%This is because absent further constraints, calibration is a weak notion of mere self-consistency.
%Attacks of this type motivate more constrained notions of calibration like 
We adopt the version of calibration from \cite{faircalib}.
\begin{definition}[Calibration] \label{def:calib}
A classifier $h: \mathcal{X} \rightarrow [0,1]$ is Calibrated with respect to distribution $\mathcal{D}$ if 
\[\forall r \in [0,1], r= \mathbb{E}_{(x,y) \sim \mathcal{D} }[y=1| h(x)=r]\]
We will primarily focus on the discretized version of this definition where the classifier assigns every data point to one of $R$ bins, each with a corresponding label $r$, that partition $[0,1]$ dis-jointly. 
We will refer to this partition as $[R]$ with $r \in [R]$ corresponding to the prediction of a bin. 
\[ \forall r \in [R], r= \mathbb{E}_{(x,y) \sim \mathcal{D} }[y=1| h(x)=r] \]
\end{definition}
%Observe that nothing in this initial definition references groups. 
%The natural generalization to the above definition with 
Calibration as a fairness requirements with demographic groups requires that the classifier $h$ is calibrated with 
respect to the group distributions $\DA$ and $\DB$ simultaneously. 
In the sections that follow when we say `calibrated' this always refers to calibration with respect to $\DA$ and $\DB$. 

\begin{theorem}
\label{thm:calib}
    The learner wants to maximize accuracy subject to using a calibrated classifier, $h: \mathcal{X} \rightarrow [R]$ where $[R]$ is a partition of $[0,1]$ into bins.%^labelled bins with each label.
    
    The learner may modify the binning strategy after the adversary commits to a corruption strategy.
    Then an adversary with corruption fraction $\alpha$ can force at most $O(\alpha)$ excess accuracy loss over the non-corrupted optimal
    classifier. 
\end{theorem}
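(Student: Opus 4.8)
The plan is to establish the upper bound constructively by exhibiting a learner strategy, exploiting the \emph{decoupling} that adaptive binning makes available. Because the learner fixes the binning only after the adversary has committed and the corrupted distribution $\widetilde{\mathcal{D}}$ has been revealed, it can choose a group-aware partition of $\mathcal{X}$ in which every cell lies inside a single group, and then label each cell $C$ by its corrupted conditional mean $\mathbb{E}_{\widetilde{\mathcal{D}}}[y\mid x\in C]$. Any partition relabelled by its within-cell conditional means is, by construction, calibrated on the distribution used; and since each cell is confined to one group, the calibration requirement for $\widetilde{\mathcal{D}}_A$ involves only the $A$-cells and that for $\widetilde{\mathcal{D}}_B$ only the $B$-cells. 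Hence the joint fairness constraint \emph{factors} into ``$\hat{h}_A$ calibrated on $\widetilde{\mathcal{D}}_A$'' and ``$\hat{h}_B$ calibrated on $\widetilde{\mathcal{D}}_B$'', with no interaction --- in sharp contrast to Predictive Parity, where an equality across groups forces an attack on group $B$ to propagate to the group-$A$ classifier. The benchmark factors the same way: the optimal jointly-calibrated $h^*$ on the true $\mathcal{D}$ restricts to $h^*_A,h^*_B$ calibrated on $\mathcal{D}_A,\mathcal{D}_B$, with $\mathbb{E}_{\mathcal{D}}[\mathbbm{1}(h^*(X)\neq Y)] = r_A\,\mathrm{err}_{\mathcal{D}_A}(h^*_A) + r_B\,\mathrm{err}_{\mathcal{D}_B}(h^*_B)$.

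This reduces the theorem to a single-group claim: if $\widetilde{\mathcal{D}}_z$ is $\gamma_z$-close to $\mathcal{D}_z$ in total variation, the learner can produce $\hat{h}_z$ calibrated on $\widetilde{\mathcal{D}}_z$ with $\mathrm{err}_{\mathcal{D}_z}(\hat{h}_z) \le \mathrm{err}_{\mathcal{D}_z}(h^*_z) + O(\gamma_z)$, where $\mathrm{err}$ is the $0/1$ loss of the rounded prediction $\mathbbm{1}[h(x)\ge 1/2]$. Take $\hat{h}_z$ to be the corrupted Bayes predictor on group $z$: the level sets of $\tilde\eta_z(x)=\mathbb{E}_{\widetilde{\mathcal{D}}_z}[y\mid x]$, discretised with a bin boundary placed at $1/2$ and each cell relabelled by its conditional mean. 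It is calibrated on $\widetilde{\mathcal{D}}_z$ by the relabelling property, and its rounded prediction coincides with the Bayes-optimal rule for $\widetilde{\mathcal{D}}_z$, so $\mathrm{err}_{\widetilde{\mathcal{D}}_z}(\hat{h}_z)\le \mathrm{err}_{\widetilde{\mathcal{D}}_z}(h)$ for every classifier $h$, in particular for $h^*_z$. Since the loss is bounded in $[0,1]$, total-variation closeness gives $\lvert \mathrm{err}_{\mathcal{D}_z}(h)-\mathrm{err}_{\widetilde{\mathcal{D}}_z}(h)\rvert \le \gamma_z$ for all $h$; chaining the three inequalities yields $\mathrm{err}_{\mathcal{D}_z}(\hat{h}_z)\le \mathrm{err}_{\mathcal{D}_z}(h^*_z)+2\gamma_z$. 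Note that this uses hardly anything about calibration beyond its being cheap to restore: \emph{any} partition relabelled by conditional means is calibrated, so unlike Equal Opportunity there is no ``needle in a haystack'' here.

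Reassembling $\hat{h}_A$ on $A$ and $\hat{h}_B$ on $B$ gives a (group-aware) classifier calibrated on $\widetilde{\mathcal{D}}_A$ and $\widetilde{\mathcal{D}}_B$, with excess error at most $2(r_A\gamma_A+r_B\gamma_B)$ on $\mathcal{D}$. It remains to bound $r_A\gamma_A+r_B\gamma_B = O(\alpha)$: viewing $A$ and $B$ as disjoint regions, $\mathrm{TV}(\mathcal{D},\widetilde{\mathcal{D}})$ is the sum of the signed-measure variations of $r_A\mathcal{D}_A$ against $\tilde r_A\widetilde{\mathcal{D}}_A$ and of $r_B\mathcal{D}_B$ against $\tilde r_B\widetilde{\mathcal{D}}_B$, so a triangle inequality gives $r_z\gamma_z \le \lVert r_z\mathcal{D}_z-\tilde r_z\widetilde{\mathcal{D}}_z\rVert_{\mathrm{TV}}+\lvert r_z-\tilde r_z\rvert$, and summing with $\lvert r_z-\tilde r_z\rvert\le \mathrm{TV}(\mathcal{D},\widetilde{\mathcal{D}})\le\alpha$ gives the bound. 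I expect the two places to be careful about are (i) making the decoupling rigorous --- in particular that cells (or bin values) not touched by a group impose no calibration constraint for that group, so the factorisation is exact in both directions, and that all classifiers under consideration may be taken group-aware --- and (ii) the cross-group accounting: the learner's error on the small group may be a large \emph{fraction} $O(\gamma_B)$ of that group, but it enters the overall loss only through the weight $r_B$, and $r_B\gamma_B=O(\alpha)$. This weighted cancellation is precisely what decoupling preserves and what equality-type constraints such as Equalized Odds and Predictive Parity destroy.
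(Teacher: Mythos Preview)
Your proposal is correct and shares the paper's central insight: because calibration is a \emph{per-group} constraint that can always be restored by relabelling each bin with its observed conditional mean, a group-aware learner can decouple the two groups, recalibrate each separately on the corrupted distribution, and avoid the cross-group propagation that dooms Predictive Parity and Equalized Odds.

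Where you differ is in the error accounting. The paper starts from $h^*$ (the true-distribution Bayes predictor, binned) and relabels each bin $r$ to its corrupted conditional mean $\hat r$; it then bounds the per-bin shift $|r-\hat r|$ by the fraction of malicious points landing in that bin and sums to get $\sum_r P[x\in r]\,|r-\hat r|\le \sum_r P[x\in r\cap x\in\mathrm{MAL}]=O(\alpha)$, i.e.\ an $L_1$ bound on the change in predictions. You instead take the corrupted Bayes predictor $\tilde\eta_z$ directly (so calibration on $\widetilde{\mathcal{D}}_z$ is automatic), invoke its Bayes-optimality on $\widetilde{\mathcal{D}}_z$ for the rounded $0/1$ loss, and transfer to $\mathcal{D}_z$ via the TV bound $|\mathrm{err}_{\mathcal{D}_z}-\mathrm{err}_{\widetilde{\mathcal{D}}_z}|\le\gamma_z$, then reassemble with $\sum_z r_z\gamma_z=O(\alpha)$. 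Your route is a bit more modular---it cleanly separates ``calibration is free to restore'' from ``TV-close distributions have close Bayes risks''---and speaks directly to $0/1$ accuracy; the paper's per-bin bookkeeping is more elementary and needs no optimality step. Both land at $O(\alpha)$.
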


\newpage

\subsection{Parity Calibration}
Motivated by Theorem \ref{thm:calib}, we introduce a \emph{novel} fairness notion we call \emph{Parity Calibration}\footnote{We would note that this is initial discussion of a novel fairness constraint that arose naturally from considering Theorem \ref{thm:calib}. The idea is in some cases it might be more desirable to have a more sensitive calibration notion, hence we define Parity Calibration. This notion requires further study and analysis before deployment in sensitive contexts.}
% \footnote{This is a strong fairness constraint and should be thought of as a strong prior that while conditional label distribution $\mathcal{D}_{y|x}$ can be different among groups, how much of each group falls in each risk category is the same.}.
Informally, this notion is a generalization of Statistical/Demographic parity \cite{dwork2012fairness} for the case of classifier with 
$R$ bins partitioning $[0,1]$.
\begin{definition}[Parity Calibration]
\label{def:paritycalib}
Classifier $h: \mathcal{X} \rightarrow [R]$, where $[R]$ is a partition of $[0,1]$ into labelled bins, satisfies
\emph{Parity Calibration} if the classifier is Calibrated (Definition \ref{def:calib}) \emph{and}
\begin{align*}
\forall r \in [R], P_{(x,y) \sim \DA} [h(x)=r] =  P_{(x,y) \sim \DB} [h(x)=r]
\end{align*} 
\end{definition}

%These lower bounds still hold for stronger notions of calibration error, namely $K_1(h, \mathcal{D})$ and $K_2(h, \mathcal{D})$ 
%which are average calibration error for the $l_1$ and $l_2$ norms respectively.
\begin{theorem}
\label{thm:paritycalib} 
Consider a learner maximizing accuracy subject to satisfying Parity Calibration.
%$h: \mathcal{X} \rightarrow [R]$ where $[R]$ is a partition of $[0,1]$ into labelled bins with each label.
    The learner may modify the binning strategy after the adversary commits to a corruption strategy.
    Then an adversary with corruption fraction $\alpha$ can force $\Omega(1)$ excess accuracy loss over the non-corrupted optimal
    classifier. 
\end{theorem}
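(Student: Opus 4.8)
The plan is to reuse the mechanism behind the Predictive Parity lower bound (Theorem~\ref{thm:predparity}): the demographic-parity-style requirement that Parity Calibration places on the \emph{bin occupancies} re-couples the two groups that the re-binning freedom of Theorem~\ref{thm:calib} was able to decouple, and the adversary attacks precisely along that coupling. I would take a true distribution $\dist=(\DA,\DB)$ with $r_B=\Theta(\alpha)$ and $r_A=1-\Theta(\alpha)$, giving \emph{both} groups the same bimodal conditional-label structure: every feature value in group $z$ has $\Pr[Y=1\mid x]\in\{\tfrac12+\gamma,\ \tfrac12-\gamma\}$ for a fixed constant $\gamma$ (say $\gamma=0.4$), each level carrying half of that group's mass. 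The two-bin classifier with bins labelled $\tfrac12\pm\gamma$ is then calibrated on $\DA$ and on $\DB$, has identical bin occupancies on the two groups --- so it satisfies Parity Calibration --- and attains the Bayes error $\tfrac12-\gamma$; this both witnesses realizability and fixes the uncorrupted optimum at error $\tfrac12-\gamma$.

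The adversary's move, following the duplication idea in the proof of Theorem~\ref{thm:predparity}, replaces $\DB$ by a distribution $\DBC$ in which \emph{every} feature value in the support of group $B$ has $\Pr[Y=1\mid x]=\tfrac12$ --- concretely, add for each natural $B$-example $(x,y)$ an equal-mass copy $(x,1-y)$ --- leaving group $A$ untouched, so $\DAC=\DA$; since $r_B=\Theta(\alpha)$, this costs $O(\alpha)$ in total variation. The argument then proceeds in three steps. First, the returned classifier $\hhat$ must be calibrated with respect to the observed distribution $\DBC$, on which the conditional mean of $Y$ over \emph{any} subset of $B$'s support equals $\tfrac12$; hence every bin receiving positive mass from $B$ must carry the label $\tfrac12$, and since the bins partition $[0,1]$ with distinct labels (merging equal labels if needed), \emph{all} of $B$'s mass lies in the single bin labelled $\tfrac12$, i.e. $P_{\DBC}[\hhat(x)=\tfrac12]=1$. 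Second, the Parity Calibration identity $P_{\DAC}[\hhat(x)=r]=P_{\DBC}[\hhat(x)=r]$ for every bin $r$ forces $P_{\DA}[\hhat(x)=\tfrac12]=1$, so all of group $A$ is routed into the $\tfrac12$-bin as well; calibration on $\DA$ then holds automatically since $\mathbb{E}_{\DA}[Y]=\tfrac12$ by the bimodal choice, so the feasible set is nonempty and in fact equals the singleton $\{\hhat\equiv\tfrac12\}$. Third, $\hhat\equiv\tfrac12$ incurs misclassification error $\tfrac12$ on each group under either the threshold-at-$\tfrac12$ decoding or the randomized decoding $\hat Y\sim\Bern(\tfrac12)$, whereas the uncorrupted optimum had error $\tfrac12-\gamma$, so the excess accuracy loss is $\gamma=\Omega(1)$. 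This conclusion is independent of $R$ and of the chosen partition, so the extra re-binning freedom is worthless to the learner.

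The step I expect to be the main obstacle is making the first step airtight: arguing that \emph{no} re-binning, however adaptive, can keep any of $B$'s corrupted mass out of the lone $\tfrac12$-labelled bin. This rests on the corruption making $\Pr[Y=1\mid x]$ \emph{identically} $\tfrac12$ on the \emph{whole} support of $B$ --- not merely on average --- so that the calibration identity $r=\mathbb{E}_{\DBC}[Y\mid \hhat(x)=r,\,x\in B]$ fails for every $r\neq\tfrac12$ no matter how the learner slices $B$'s support; the duplication construction achieves this, but one must verify the total-variation budget and, if $\mathcal{X}$ is atomless, set up the instance (for example with finitely many feature values at each $\eta$-level) so that the duplicated distribution is well defined and indistinguishable from a group whose conditional label mean is identically $\tfrac12$. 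A minor secondary check is that the instance meets the paper's standing assumptions --- a constant fraction of positives and of negatives in each group, and \eqref{ass:raplus} --- which is immediate for $\gamma=0.4$ and any $c\ge2$.
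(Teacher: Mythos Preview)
Your proposal is correct and follows essentially the same approach as the paper: make $r_B=O(\alpha)$, use the duplication trick from Theorem~\ref{thm:predparity} to drive $\mathbb{E}[Y\mid x]$ to $\tfrac12$ identically on $B$'s support, observe that calibration on $\DBC$ forces all of $B$ into the $\tfrac12$-bin, and then let the Parity Calibration occupancy constraint propagate this to $A$. You have added useful rigor the paper's sketch omits---an explicit bimodal $\tfrac12\pm\gamma$ construction that makes realizability and the uncorrupted optimum concrete, and the check that calibration on $\DA$ still holds for the forced $\hhat\equiv\tfrac12$---but the mechanism is identical.
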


%We defer the proof of this statement to the appendix, but the intuition is a follows.

If the size of Group $B$ is $O(\alpha)$, then following a similar duplication strategy for Predictive Parity Theorem \ref{thm:predparity},
then the adversary can force Group $B$ to have an expected label of $50\%$, i.e.
$\forall x \in B, \mathbb{E}_{x \sim \DB}[y|x]=50\%$.
Thus, any classifier that is calibrated must assign all of Group $B$ to a $50\%$ bucket.
In order to satisfy \emph{Parity Calibration}, the classifier must do the same to Group $A$, yielding $50\%$ error on Group $A$.

% \kmsdelete{\subsection{Discussion}
% In general these results are consistent with the observed behavior of Calibration in other parts of theoretical computer science.
% If the learner/society really only cares about accuracy, then the insensitivity in Section \ref{subsec:calib} is somewhat of a feature, not a bug, 
% especially if the unreliability of data in Group $B$ optimistically could be transient?
% %However, advocates for stronger notions calibration would instead note that in \ref{thm: calib} 
% In general, when thinking about accuracy loss and malicious in the context of fair ERM; what is the appropriate amount of sensitivity in
% the learning process? We shall discuss this somewhat more in Section \ref{sec: discussion}.}
% %We would observe that are substantial 

\begin{comment}
\begin{definition}[Average Calibration Error]
The avergae calibration error of a predictor $h$ (with $h: \mathcal{X} \rightarrow [0,1]$) on distribution $\mathcal{D}$ is:
\[ K_1(f, \mathcal{D}) = \sum_{v \in R(h) } P_{(x,y) \sim D} [h(x)=v]|v-\mathbbm{E}_{(x,y) \sim \mathcal{D}}[y|h(x)=v] |\]
where $R(h)$ is the range of $h$. 

Similarly, the average squared calibration error is 
    \[ K_1(f, \mathcal{D}) = \sum_{v \in R(h) } P_{(x,y) \sim D} [h(x)=v]|(v-\mathbbm{E}_{(x,y) \sim \mathcal{D}}[y|h(x)=v])^2\]\end{definition}

\begin{theorem}
    
\end{theorem}
\end{comment}

%\input{sections/minimax}

\section{Discussion}
\label{sec: discussion}
We study Fair-ERM in the Malicious Noise model, and in some cases allow 
the learner to maintain optimal overall accuracy despite the signal in Group $B$ being almost entirely washed out.
%when we allow learners to use the
%$\PQ$ randomized expansion of the hypothesis class $\mathcal{H}$
In particular, we show that different fairness constraints have fundamentally different behavior in the presence of Malicious Noise, in terms of the amount of accuracy loss that a given level of Malicious Noise could cause a fairness-constrained learner to incur. 
The key to achieving our results, which are more optimistic than those in \cite{lampert}, is allowing for improper learners using the (P,Q)-randomized expansions of the given class $\mathcal{H}$.
%We \kmsreplace{present a picture of the}{prove upper and lower bounds on}
%accuracy loss for a range of fairness notions, given \kmsreplace{this simple randomization step.}{learning over $\PQ$.
%In general our results indicate Fair-ERM (given learning over $\PQ$) is more robust than claimed in \cite{lampert}.
The type of smoothness we create by using $\PQ$ seems to be a natural property that is likely shared by many natural hypothesis classes.

Fairness notions are motivated as a response to learned disparities when there is systemic error affecting one group. 
Fairness notions are supposed to mitigate this by ruling out classifiers that have worse performance on a sub-group. 
This can peg both classifiers at a lower level of performance in order to \emph{motivate} \cite{hardt16} improving the data collection or labelling process to obtain more reliable performance. 
%So in \kmsreplace{some}{a} sense, sensitivity of the fairness notion to poor sub-group performance caused by malicious noise is the \textit{point} of fairness constraints! 
However, it is also desirable that fairness constraints perform gracefully when subject to Malicious Noise, because fairness constraints will be used in contexts where the data is unreliable and noisy. %without the learner's knowledge.
This tension, exposed by our work, motivates 
%a revisiting of fairness notions from first principles approach and trying to axiomatize the 
%desired properties of a fairness intervention a la cryptography and privacy. \footnote{Work in multi-calibration \cite{multicalib} is a viable direction for this problem but it is unclear how 
%that and related notions behave with unreliable data. }
ongoing work studying the sensitivity level of fairness constraints.

This work was supported in part by the National Science Foundation under grant CCF-2212968, by the
Simons Foundation under the Simons Collaboration on the Theory of Algorithmic Fairness, by the Defense
Advanced Research Projects Agency under cooperative agreement HR00112020003. The views expressed in
this work do not necessarily reflect the position or the policy of the Government and no official endorsement
should be inferred. Approved for public release; distribution is unlimited.

\bibliographystyle{apalike}
\bibliography{ref}

\onecolumn
\appendix

%\section{Neurips Ethics Review}

%\input{sections/ethics}

\section{Fairness Notions}
\label{fairtable}
\begin{center}
\begin{tabular}{ |p{4cm}||p{9cm}| }
 \hline
 \multicolumn{2}{|c|}{ Fairness Constraints}\\
 \hline
 Demographic Parity \cite{dwork2012fairness} & $ P_{(x,y) \sim \DA} [h(x)=1]= P_{(x,y) \sim \DB} [h(x)=1] $  \\
 \hline 
 Equal Opportunity \cite{hardt16} & $ P_{(x,y) \sim \DA} [h(x)=1| y=1 ]= P_{(x,y) \sim \DB} [h(x)=1| y=1 ] $ \\
 \hline
 Equalized Odds \cite{hardt16} &   $P_{(x,y) \sim \DA} [h(x)=1| y=1 ]= P_{(x,y) \sim \DB} [h(x)=1| y=1 ]$ and \\
  & $P_{(x,y) \sim \DA} [h(x)=1| y=0 ]= P_{(x,y) \sim \DB} [h(x)=1| y=0 ]$ \\
  \hline
  Predictive Parity \cite{chouldechova2017fair} & $P_{(x,y) \sim \DA} [y=1| h(x)=1 ]= P_{(x,y) \sim \DB} [y=1| h(x)=1 ]$ \\
\hline
Calibration\footnote{$h:\mathcal{X} \rightarrow [0,1]$} \cite{klein16,dawid} & $ \forall r \in [0,1],\quad  r = \mathbb{E}_{x,y \sim \mathcal{D}} [y|h(x)=r] $\\
\hline
\end{tabular}
\end{center}

\section{Proofs}

\corruptparity*

\begin{proof}[Proof of Proposition~\ref{prop:corruptparity}]\label{proof:corruptparity}
We want to bound the change in the proportion of positive labels assigned by $h$ when we move from the original distribution $\mathcal{D}$ to the corrupted distribution $\widetilde{\mathcal{D}}$. For a fixed group $A$, we can express the proportion of positive labels assigned by $h$ in $\widetilde{\mathcal{D}}$ in terms of the proportion of positive labels assigned by $h$ in $\mathcal{D}$ as follows:

\begin{equation}
    P_{(x,y) \sim \DAC} [h(x)=1] = \frac{(1-\alpha) P_{(x,y) \sim \DA} [h(x)=1] \cdot \RA + E_A}{(1-\alpha) \RA + \alpha_A}
\end{equation}

where $\alpha_A$ is the proportion of the data set that is corrupted and in group $A$ and $E_A$ is the proportion of the data set that is corrupted, in group $A$ and positively labeled by $h$.

Our goal is to obtain an upper bound on the difference between $P_{(x,y) \sim \DAC} [h(x)=1]$ and $P_{(x,y) \sim \DA} [h(x)=1]$.
We use the fact that $E_A \leq \alpha$ and $\alpha_A \leq \alpha$ to obtain the following upper bound:

\begin{equation}
    \left| P_{(x,y) \sim \DAC} [h(x)=1] - P_{(x,y) \sim \DA} [h(x)=1] \right| = \left| \frac{E_A - \alpha_A P_{(x,y) \sim \DA} [h(x)=1] }{(1-\alpha) \RA + \alpha_A} \right| \leq \frac{\alpha}{(1-\alpha) r_A + \alpha  }
\end{equation}
\end{proof}

\mainparity*

\begin{proof}[Proof of Theorem~\ref{thm:mainparity}]\label{proof:mainparity}
For $z \in \{A, B \}$, let $\normalF_z(h)$ and $\corruptF_z(h)$ denote the proportions of positive labels assigned by $h$ in group $z$ in the original and corrupted distributions respectively. That is, for group $A$, $\normalF_A(h) = P_{(x,y) \sim \DA} [ h (x)=1]$ and $\corruptF_A(h) = P_{(x,y) \sim \DAC} [ h (x)=1]$.
    It suffices to show that there exists $h \in \closure$ that satisfies the guarantees above. 
    % \pcocomment{Might need to add a lemma before this where we show that this is sufficient.}
    Consider $\hstar \in \hclass$. By the realizability assumption 
    % \pcocomment{there are two realizability assumptions here, one where $h^*$ satisfies the violation up to $\delta$ and one where it's exact equality. We'll use the one assuming equality and we'll add a lemma showing that things work fine for the $\delta$ violation one.}
    , $\hstar$ satisfies the parity constraint i.e $\normalF_A(h^*) = \normalF_B(h^*)$. 
    After the corruption, the parity violation of $h^*$, $|\corruptF_A(h^*) - \corruptF_B(h^*)|$ may increase. Now we define the following parameters ($p_z$ and $q_z$) for $z \in \{A, B \}$.
    \begin{equation}
        p_z = \begin{cases}
            \frac{\normalF_z(h^*) - \corruptF_z(h^*)}{1 - \corruptF_z(h^*)} & \text{if} \ \normalF_z(h^*) \geq \corruptF_z(h^*)\\
            \frac{\corruptF_z(h^*) - \normalF_z(h^*)}{\corruptF_z(h^*)} & \text{otherwise}\\
        \end{cases} \quad
        q_z = \begin{cases}
            1 & \text{if} \ \normalF_z(h^*) \geq \corruptF_z(h^*)\\
            0 & \text{otherwise}\\
        \end{cases}
    \end{equation}
    Now consider a hypothesis $\hhat$ that behaves as follows: Given a sample $x$:
    \begin{itemize}
        \item  If $x \in A$, with probability $p_A$, return label $q_A$. Otherwise return $h^* (x)$
        \item Similarly, if $x \in B$, with probability $p_B$, return label $q_B$. Otherwise return $h^* (x)$
    \end{itemize}
    $\hhat \in \PQ$ since it follows the definition of our closure model. We will now show that $\hhat$ satisfies the parity constraint in the corrupted distribution (i.e $\corruptF_A(\hhat) = \corruptF_B(\hhat)$). First, observe that for $z \in \{A, B \} $, if $\normalF_z(h^*) \geq \corruptF_z(h^*)$, then $\corruptF_z(\hhat) = \normalF_z(h^*)$. This is because
    \begin{align*}
        \corruptF_z(\hhat) 
        &= (1 - p_z) \corruptF_z(h^*) + p_z q_z \\
        &= \corruptF_z(h^*) + p_z(1 - \corruptF_z(h^*)) \\
        &= \corruptF_z(h^*) + \normalF_z(h^*) - \corruptF_z(h^*) \\
        &= \normalF_z(h^*)
    \end{align*}
    Similarly, if $\normalF_z(h^*) < \corruptF_z(h^*)$, then $\corruptF_z(\hhat) = \normalF_z(h^*)$. This is because
    \begin{align*}
        \corruptF_z(\hhat) 
        &= (1 - p_z) \corruptF_z(h^*) + p_z q_z \\
        &= \corruptF_z(h^*) + p_z(0 - \corruptF_z(h^*)) \\
        &= \corruptF_z(h^*) + \normalF_z(h^*) - \corruptF_z(h^*) \\
        &= \normalF_z(h^*)
    \end{align*}
    Thus, $\corruptF_A(\hhat) = \normalF_A(h^*) = \normalF_B(h^*) = \corruptF_B(\hhat)$. Therefore $\hhat$ satisfies the parity constraint in the corrupted distribution.
    
    We will now show that $\error{\hhat} \leq O(\alpha) $. Since $\hhat$ deviates from $\hstar$ with probability $p_A$ on samples from $A$, and with probability $p_B$ on samples from $B$, we only need to show that the proportion of samples such that $\hhat (x) \neq h^* (x)$ is small. Fix a group $z \in \{A, B\}$. If $\normalF_z (h^*) \geq \corruptF_z (h^*)$, then with probability $p_z = \frac{\normalF_z (h^*) -\corruptF_z (h^*)}{1 - \corruptF_z (h^*)}$, $\hhat$ returns a positive label for samples in group $z$. Thus, the expected proportion of samples in group $z$ such that $\hhat (x) \neq h^* (x)$ is $p_z$ times the proportion of negative labelled samples (by $h^*$) in group $z$ (since those get flipped to positive).
    \begin{align*}
        \EE_{x \in z} [\one (\hhat (x) \neq h^* (x))] &= p_z \cdot P_{(x,y) \sim \dist } [x \in z] (1 - \corruptF_z (h^*) ) \\
        &= \frac{\normalF_z (h^*) -\corruptF_z (h^*)}{1 - \corruptF_z (h^*)} \cdot P_{(x,y) \sim \dist } [x \in z] (1 - \corruptF_z (h^*) ) \\
        &= (\normalF_z (h^*) -\corruptF_z (h^*)) \cdot P_{(x,y) \sim \dist } [x \in z] 
    \end{align*}
    Similarly, if $\corruptF_z (h^*) > \normalF_z (h^*)$, then with probability $p_z = \frac{\corruptF_z (h^*) -\normalF_z (h^*)}{\corruptF_z (h^*)}$, $\hhat$ returns a negative label. Thus, the expected proportion of samples in group $z$ such that $\hhat (x) \neq h^* (x)$ is $p_z$ times the proportion of positively labelled samples (by $h^*$) in group $z$ (since those get flipped to negative).
    \begin{align*}
        \EE_{x \in z} [\one(\hhat (x) \neq h^* (x))] &= p_z \cdot P_{(x,y) \sim \dist } [x \in z] \cdot \corruptF_z (h^*) \\
        &= \frac{\corruptF_z (h^*) -\normalF_z (h^*)}{\corruptF_z (h^*)} \cdot P_{(x,y) \sim \dist } [x \in z] \cdot \corruptF_z (h^*)  \\
        &= (\corruptF_z (h^*) -\normalF_z (h^*)) \cdot P_{(x,y) \sim \dist } [x \in z] 
    \end{align*}
    Therefore, the expected total number of samples such that $\hhat (x) \neq h^* (x)$ across the entire distribution is bounded as follows:
    \begin{align*}
        \mathbb{E}_{(x,y) \sim \mathcal{D}} \ [\one (\hhat (x) \neq h^* (x))] 
        &= \sum_{z \in \{ A, B \}} |\corruptF_z (h^*) -\normalF_z (h^*)| \cdot P_{(x,y) \sim \dist } [x \in z] \\ 
        &\leq \sum_{z \in \{ A, B \}} \frac{\alpha}{(1- \alpha) P_{(x,y) \sim \dist } [x \in z] + \alpha} \cdot P_{(x,y) \sim \dist } [x \in z] \\ \intertext{by proposition \ref{prop:corruptparity}} 
        &\leq \frac{2\alpha}{(1- \alpha)}
    \end{align*}
    Note that even though the adversary can choose a different distribution at each timestep, we can wlog assume the adversary chooses the same distribution $\widetilde{D}$ where the quantity $|\corruptF_z (h^*) -\normalF_z (h^*)|$ is maximized at every timestep, as in Proposition \ref{prop:corruptparity}.
    Although the model in \cite{kearns1988learning} is slightly weaker than \cite{lampert}, this theorem holds in full generality for both models where we replace the difference $|\corruptF_z (h^*) -\normalF_z (h^*)|$ with the bounds from Lemma 2 of \cite{lampert}. The dependence on $\alpha$ remains the same in both cases.
    % \pcocomment{Because of the way I define $z$, the entire proof/construction should work for any number of groups but we would have $n \times \alpha$ in the numerator for accuracy loss. I wonder if that's avoidable} \pcocomment{Update: I think it's avoidable. The bound in proposition 1 could be improved to make the adversary's changes across all groups sum up to $\alpha$}
\end{proof}

\subsection{Equal Opportunity}

\corrupttpr*

\begin{proof}[Proof of Proposition~\ref{prop:corrupttpr}]\label{proof:corrupttpr}
For a fixed group $A$, the TPR of $h$ in $\widetilde{\mathcal{D}}$ can be expressed in terms of the TPR of $h$ in the original distribution $\mathcal{D}$ as follows:
\begin{equation}
        \text{TPR}_A(h, \widetilde{\mathcal{D}}) = \frac{(1-\alpha) \text{TPR}_A(h, \mathcal{D}) \cdot \RA + E_A^+}{(1-\alpha) \RA + \alpha_A^+}
\end{equation}
where $\alpha_A$ is the proportion of the data set that is corrupted and in group $A$ and $E_A^+$ is the proportion of the data set that is corrupted, in group $A$, is positive, and is predicted as positive by $h$.
Thus,
\begin{equation}
    \left| \text{TPR}_A(h, \widetilde{\mathcal{D}}) - \text{TPR}_A(h, \mathcal{D}) \right| = \left| \frac{E_A - \alpha_A \text{TPR}_A(h, \mathcal{D}) }{(1-\alpha) \RA + \alpha_A} \right| \leq \frac{\alpha}{(1-\alpha) r_A^+ + \alpha }
\end{equation}
since $E_A \leq \alpha$ and $\alpha_A \leq \alpha$
\end{proof}

% \begin{theorem}
%     For any hypothesis class $\mathcal{H}$ and distribution $ \dist = (\DA, \DB)$, a robust fair-ERM learner for the equal opportunity constraint in the Malicious Adversarial Model returns a hypothesis $\hhat$ such that 
%     \begin{equation*}
%         \error{\hhat} \leq O(\sqrt{\alpha})
%     \end{equation*}
% \end{theorem}

\maineopp*

\begin{proof} [Proof of Theorem~\ref{thm:maineopp}]\label{proof:maineopp}

We will use Proposition \ref{prop:corrupttpr} and the assumption we introduced in Section \ref{sec:prelim}, Equation \ref{ass:raplus} to show this statement.
 To prove the statement, it suffices to show that there exists $h \in \PQ$ that satisfies the guarantees above. 
Consider $\hstar \in \hclass$. By the realizability assumption, $\hstar$ satisfies the equal opportunity constraint i.e $\text{TPR}_A(h^*, \mathcal{D}) = \text{TPR}_B(h^*, \mathcal{D})$. 
After the corruption, the equal opportunity violation of $h^*$, $|\text{TPR}_A(h^*, \widetilde{\mathcal{D}}) - \text{TPR}_B(h^*, \widetilde{\mathcal{D}})|$ may increase. Now we define the following parameters ($p_z^i$ and $q_z^i$) for $i, z \in \{A, B \}$. 
\begin{equation}\label{eq:tpr-prob}
    p_z^i = \begin{cases}
        \frac{\corruptF_i(h^*) - \corruptF_z(h^*)}{1 - \corruptF_z(h^*)} & \text{if} \ \corruptF_i(h^*) \geq \corruptF_z(h^*)\\
        \frac{\corruptF_z(h^*) - \corruptF_i(h^*)}{\corruptF_z(h^*)} & \text{otherwise}\\
    \end{cases} \quad
    q_z^i = \begin{cases}
        1 & \text{if} \ \corruptF_i(h^*) \geq \corruptF_z(h^*)\\
        0 & \text{otherwise}\\
    \end{cases}
\end{equation}
One can think of the parameter $p_z^i$ as the proportion of samples in group $z$ whose outcomes needs to be changed in order to match the true positivity rate of group $i$.
% \pcocomment{The parameters should be clipped so that they are in the $[0,1]$ interval. will fix later} 
    Now consider two hypotheses $\hhat_i$ for $i \in \{ A, B\}$ that behave as follows: Given a sample $x$:
    \begin{itemize}
        \item  If $x \in A$, with probability $p_A^i$, return label $q_A^i$. Otherwise return $h^* (x)$
        \item Similarly, if $x \in B$, with probability $p_B^i$, return label $q_B^i$. Otherwise return $h^* (x)$
    \end{itemize}
One can think of $\hhat_i$ as a hypothesis that deviates from $h^*$ on every other group to make their true positive rate on the corrupted distribution match that of group $i$.
Observe that $\hhat_i \in \PQ$ for $i \in \{A, B\}$ since it follows the definition of our closure model $\PQ$. We will now show that $\hhat_i$ for $i \in \{ A, B\}$ satisfies the True Positive Rate constraint on the corrupted distribution (i.e $\corruptF_A(\hhat_i) = \corruptF_B(\hhat_i)$ for fixed $i \in \{ A, B\}$). First, observe that for $z \in \{A, B \} $, if $\corruptF_i(h^*) \geq \corruptF_z(h^*)$, then $\corruptF_z(\hhat_i) = \corruptF_i(h^*)$. This is because
    \begin{align*}
        \corruptF_z(\hhat_i) 
        &= (1 - p_z) \corruptF_z(h^*) + p_z q_z \\
        &= \corruptF_z(h^*) + p_z(1 - \corruptF_z(h^*)) \\
        &= \corruptF_z(h^*) + \corruptF_i(h^*) - \corruptF_z(h^*) \\
        &= \corruptF_i(h^*)
    \end{align*}
    Similarly, if $\corruptF_i(h^*) < \corruptF_z(h^*)$, then $\corruptF_z(\hhat) = \corruptF_i(h^*)$. This is because
    \begin{align*}
        \corruptF_z(\hhat) 
        &= (1 - p_z) \corruptF_z(h^*) + p_z q_z \\
        &= \corruptF_z(h^*) + p_z(0 - \corruptF_z(h^*)) \\
        &= \corruptF_z(h^*) + \corruptF_i(h^*) - \corruptF_z(h^*) \\
        &= \corruptF_i(h^*)
    \end{align*}
    Thus, $\corruptF_A(\hhat_i) = \corruptF_i(h^*) = \corruptF_B(\hhat_i)$. Therefore $\hhat_i$ for $i \in \{ A, B\}$ satisfies the Equal Opportunity Constraint on the corrupted distribution.
    We will now show that the existence of at least one $\hhat_i$ for $i \in \{A, B\}$ satisfies \[ \error{\hhat} \leq O(\sqrt{\alpha}) \].
    Since $\hhat_i$ deviates from $\hstar$ with probability $p_A^i$ on samples from $A$, and with probability $p_B^i$ on samples from $B$, it suffices to show that $p_A^i \cdot r_A + p_B^i \cdot r_B$ is $O(\sqrt{\alpha})$ for $i \in \{A, B\}$. 
    
    We consider the following cases:
\begin{enumerate}
    \item Suppose wlog $r_B \leq \frac{\sqrt{\alpha}}{1 - \sqrt{\alpha}}$. Then $\hat{h}_{B}$ satisfies the guarantee. This is because $p_A^B = 0$ (by equation~\ref{eq:tpr-prob} ) and $p_B^B \leq 1$. Thus, $p_A^B \cdot r_A + p_B^B \cdot r_B$ is $O(\sqrt{\alpha})$. 
    %In words, this is the case where 

    \item If instead $\min (r_A, r_B) > \frac{\sqrt{\alpha}}{1 - \sqrt{\alpha}}$. wlog let $B$ be a group with the highest true positive rate greater than 0.5 or the smallest true positive rate less than 0.5. At least one group must satisfy this constraint. If $B$ has the highest true positive rate greater than 0.5, then 
    \begin{align*}
    p_B^A &= \frac{\corruptF_B (h^*) - \corruptF_A (h^*)}{\corruptF_B (h^*)} \\
    &\leq \frac{\corruptF_B (h^*) - F_B (h^*) + F_A (h^*) - \corruptF_A (h^*)}{0.5} \intertext{since $\corruptF_B (h^*) \geq 0.5$ and by realizability assumption $\normalF_B (h^*) = \normalF_A (h^*)$}
    &\leq 2 |\corruptF_B (h^*) - F_B (h^*)| + 2 |\normalF_A (h^*) - \corruptF_A (h^*)| \\ \intertext{by proposition~\ref{prop:corrupttpr} and the assumption in Equation \ref{ass:raplus}.}
    &\leq O (\sqrt{\alpha})
    \end{align*} 
    Thus, $p_A^A \cdot r_A + p_B^A \cdot r_B$ is at most $O(\sqrt{\alpha})$
    The case where $B$ has the smallest true positive rate follows similarly.
\end{enumerate}
Similar to the proof of Theorem~\ref{thm:mainparity}, we can assume wlog the adversary chooses the same distribution $\widetilde{D}$ where the quantity $|\corruptF_z (h^*) -\normalF_z (h^*)|$ is maximized at every timestep, as in Proposition \ref{prop:corruptparity}.
Although the model in \cite{kearns1988learning} is slightly weaker than \cite{lampert}, this theorem holds in full generality for both models where we replace the difference $|\corruptF_z (h^*) -\normalF_z (h^*)|$ with the bounds from Lemma 5 of \cite{lampert}. The dependence on $\alpha$ remains the same in both cases.
\end{proof}

%\lowereopp*
%\begin{proof} [Proof of Theorem~\ref{thm:lowereopp}]\label{proof:lowereopp}
%Suppose group A is of size $1 - \sqrt{a}$ and B is of size $\sqrt{a}$. Suppose the best classifier in the hypothesis class can only attain $(1 - \sqrt{\alpha})\%$ percent TPR on both groups. An adversary with $\alpha\%$ can corrupt the distribution so that this classifier has $100 \%$ percent on corrupted distribution for group $B$. Fix a classifier $h$ returned by a learner in this setting. In order to satisfy the perceived fairness constraint of the ERM solver i.e tpr of $h$ must be the same for both groups in the corrupted distribution, then  

%\end{proof}

\lowereopp*

\begin{proof} [Proof of Theorem~\ref{thm:lowereopp}]\label{proof:lowereopp}
We will show a distribution and a malicious adversary of power $\alpha$ such that any hypothesis returned by a learner incurs at least $\sqrt{\alpha}$ expected excess error.
The distribution $\mathcal{D}$ will be such that $P_{x \sim \mathcal{D}}[x \in B]= \Omega(\sqrt{\alpha})$. This distribution will be supported on exactly four points $x_1 \in A, x_2 \in A, x_3 \in B, x_4 \in B$ with labels $y_1 = +, y_2 = -, y_3 = +, y_4 = -$. We also have that 
$$P_{x,y \sim \mathcal{D}}[x = x_1, y = +] = P_{x, y \sim \mathcal{D}}[x = x_2, y = -] = \frac{1 - \sqrt{\alpha}}{2}$$ 
and 
$$P_{x, y \sim \mathcal{D}}[x = x_3, y = +] = P_{x,y \sim \mathcal{D}}[x = x_4, y = -] = \frac{\sqrt{\alpha}}{2}$$
That is, each group has equal proportion of positives and negatives.
%\textcolor{red}{ should this say $\DA$ instead of $\mathcal{D}$}

The adversary commits to a poisoning strategy that places positive examples from Group $B$ into the negative region of the optimal classifier. That is, the adversary changes the original distribution $\mathcal{D}$ so that 
$$P_{x, y \sim \mathcal{D}}[x = x_1, y = +] = P_{x, y \sim \mathcal{D}}[x = x_2, y = -] = \frac{(1-\alpha)(1 - \sqrt{\alpha})}{2}$$
$$P_{x, y \sim \mathcal{D}}[x = x_3, y = +] =
P_{x, y \sim \mathcal{D}}[x = x_4, y = -] = \frac{(1-\alpha)\sqrt{\alpha}}{2}$$ and $P_{x, y \sim \mathcal{D}}[x = x_4, y = +] = \alpha$

We assume the perfect classifier is in the hypothesis class.
Now fix a classifier $h$ returned by a learner. This classifier must satisfy equal opportunity. Let $p_1, p_2, p_3, p_4$ be the probability that 
$h$ classifies $x_1, x_2, x_3, x_4$  as positive, respectively. 
Observe that $\widetilde{\text{TPR}}(h_A) = p_1$ and $\widetilde{\text{TPR}}(h_B) = 1 - (1 - p_4) \alpha' - (1-p_3)(1 - \alpha')$ where $\alpha' = \frac{2\sqrt{\alpha}}{(1 - \alpha) + 2\sqrt{\alpha}}$. The latter is due to the samples $(x_4, +)$ which the adversary added to the distribution. 
The adversary added an $\alpha$ amount which turned out to be an $\alpha'$ proportion of the positives in $B$. 
Since this classifier satisfies equal opportunity on the corrupted distribution, it must be the case that $p_1 = 1 - (1 - p_4) \alpha' - (1-p_3)(1 - \alpha')$. Thus, $(1 - p_1) \geq (1 - p_4) \alpha'$.
The error of $h$ on the original distribution is therefore
\begin{align*}
& (1 - p_1 + p_2) \frac{(1 - \sqrt{\alpha})}{2} + (1 - p_3 + p_4) \frac{\sqrt{\alpha}}{2} \\
\geq & \ (1 - p_1) \frac{(1 - \sqrt{\alpha})}{2} + p_4 \frac{\sqrt{\alpha}}{2} \\ \intertext{by the equal opportunity constraint}
\geq & \ (1 - p_4) \alpha' \frac{(1 - \sqrt{\alpha})}{2} + p_4 \frac{\sqrt{\alpha}}{2} \\
= & \ (1 - p_4) \cdot \frac{2\sqrt{\alpha}}{(1 - \alpha) + 2\sqrt{\alpha}} \cdot \frac{(1 - \sqrt{\alpha})}{2} + p_4 \frac{\sqrt{\alpha}}{2} \\
\geq & \ (1 - p_4) \frac{\sqrt{\alpha}}{2} + p_4 \frac{\sqrt{\alpha}}{2}  \geq \Omega (\sqrt{\alpha}) 
\end{align*}
\end{proof}

\section{Equalized Odds}
\label{sec:eoddsproof}
Now we will consider Equalized Odds.
%which will have a similar lower bound to Calibration in Theorem \ref{thm:calib}. 

\begin{proof}[Equalized Odds Proof  of $\Omega(1)$ accuracy loss:] \label{proof:eodds}
%To show that for Equalized Odds requires $\Omega(1)$ error when the adversary has corruption budget $\alpha$, even with our hypothesis class $\PQ$, 
it suffices to exhibit a `bad' distribution and matching corruption strategy; which we exhibit below.

\begin{enumerate}
\item Say Group A has $1-\alpha$ of the probability mass i.e. $P_{(x,y) \sim \calD}[x \in A] \geq 1-\alpha$ and thus $P_{(x,y) \sim \calD}[x \in B] \leq \alpha$.
\item The positive fraction for each group under distribution $\mathcal{D}$ is $P_{(x,y) \sim \DA}[y=1]=P_{(x,y) \sim D_B}[y=1]=\frac{1}{2}$
\item Since $P_{(x,y)\sim \calD}[x \in B] \leq \alpha$, the adversary has sufficient corruption budget such that they can inject a duplicate copy of each example in B but with the opposite label.   
That is, for each example x in Group B in the training set, the adversary adds another identical example but with the opposite label.
\end{enumerate}

This adversarial data ensures that on Group $B$, any hypothesis $h$ (of any form) will now satisfy 
\[ P_{x \sim \hat{\mathcal{D}}_{B}}[h(x)=1 | y=1] = P_{x \sim   \hat{\mathcal{D}}_{B}}[h(x)=1  | y=0] = p  \] for some value $p \in [0,1]$
due to the indistinguishable duplicated examples; i.e. the hypothesis can choose how often to accept examples [e.g. increase or decrease $p$] but it cannot distinguish positive/negative examples in Group $B$.
%\footnote{Since these distributions are evenly balanced by class, $P_{x \sim \DB}[h(x)=1]=p$.}

Note that we can select $p$ using some arbitrary $h$ but that randomness does not help us.
Observe that similarly, the True Negative/False Negative Rates on Groyp $B$ must be $1-p$.
%So, for group A, to satisfy equalized odds, both thes terms must  must also equal $1/2$

Since $A$ is evenly split among positive and negative and we must satisfy Equalized Odds, 
this means that our error rate on group A is
\begin{align*}
& P_{(x,y) \sim \DA}[ h(x) \neq y ] =  P_{(x,y) \sim \DA}[ h(x) \neq y \cap y=1] +  P_{(x,y) \sim \DA}[ h(x) \neq y \cap y=0] \\
& =P_{(x,y) \sim \DA}[ h(x) \neq 1 | y =1  ]P[y=1] + P_{(x,y) \sim \DA}[ h(x) \neq 0 | y = 0  ]P[y=0]  \\
& = P_{(x,y) \sim \DA}[ h(x) \neq 0 | y =1  ]P[y=1] + P_{(x,y) \sim \DA}[ h(x) = 1 | y = 0  ]P[y=0] \\
& = (1-TPR_{A}) \frac{1}{2} + FPR_{A} \frac{1}{2} \\
& =(1-p)(\frac{1}{2}) + p(\frac{1}{2})= \frac{1}{2}
\end{align*}
So, the adversary has forced us to have $50 \%$ error on group A which yeilds the result.
\end{proof}

\section{Calibration Proofs}

\begin{proof}[Proof of Theorem ~\ref{thm:predparity}, Predictive Parity Lower Bound]  \label{proof:predparity}

%The proof of this lower bound is similar to that of Proof \ref{proof:eodds} but with the flipped conditioning. 

To show that Predictive Parity requires $\Omega(1)$ error when the adversary has corruption budget $\alpha$, even with our hypothesis class $\PQ$, 
it suffices to exhibit a `bad' distribution and matching corruption strategy; which we exhibit below. 

Recall that we require that $P_{x \sim \DA}[h(x)=1]>0$ and $P_{x \sim \DB}[h(x)=1]>0$. 
This is to avoid the case where the learner rejects all points from Group $B$.

\begin{enumerate}
\item Assume that group A has $1-\alpha$ of the probability mass i.e. $P_{(x,y) \sim \calD}[x \in A] \geq 1-\alpha$ and thus $P_{(x,y) \sim \calD}[x \in B] \leq \alpha$.
\item The positive fraction for each group under distribution $\mathcal{D}$ is $P_{(x,y) \sim \DA}[y=1]=P_{(x,y) \sim D_B}[y=1]=\frac{1}{2}$
\item Since $P_{(x,y) \sim \calD}[x \in B] \leq \alpha$, the adversary has sufficient corruption budget such that they can a duplicate copy of each example in B but with the opposite label.   
That is, for each example x in Group B in the training set, the adversary adds another identical example but with the opposite label.
\end{enumerate}

This adversarial data ensures that on Group $B$, any hypothesis $h$ (of any form) will now satisfy 
\[  P_{(x,y) \sim \hat{\mathcal{D}}_{B}}[y=1 | h(x)=1 ] = P_{(x,y) \sim   \hat{\mathcal{D}}_{B}}[y=0  | h(x)=0]=\frac{1}{2} \]
due to the indistinguishable duplicated examples.  
So, for Group A, to satisfy Predictive Parity, both these terms must also equal $\frac{1}{2}$ and induce $50 \%$ error on Group $A$.
\end{proof}

\begin{proof}[Proof of Theorem~\ref{thm:calib}, Calibration $O(\alpha)$.]\label{proof:calib}

In order to prove this statement, we consider $h^{*}$ which is the Bayes Predictor $h^{*} = \mathbb{E}[y|x]$, but using some finite binning scheme $[R]$.
Clearly $h^{*}$ is calibrated on natural data and $h^{*}: \mathcal{X} \rightarrow [R]$. 

We  will show how to modify $h^{*}$ to still satisfy the fairness constraint on the corrupted data without losing too much accuracy, regardless
of the adversarial strategy.

In the case of Calibration,  we will do this by just separately re-calibrating each group. 

Let $[\hat{R}]:=[R]$.
We will now modify $[\hat{R}]$ from $[R]$ to be calibrated on the malicious data.

That is;
For each group $z$ (i.e $z=A$ or $z=B$), for each bin $r \in [R]$ (i.e., ${x: h^{*}(x)=r}$), we create a new bin if there is no bin in $[R]$ with value  $\hat{r} = E_{(x,y) \sim \mathcal{D}_z}[y | h^{*}(x)=r]$.  

That is, we define $\hat{h}(x) = \hat{r}$ for all $x \in g$ such that $h^{*}(x)=r$.

Observe that by construction, $\hat{h}$ is calibrated separately for each group, so it is calibrated overall.  We just need to analyze the excess error of $\hat{h}$ compared to $h^{*}$. 
We will show this is only
$O(\alpha)$.

%Recall that our notion of accuracy for this problem is given by thresholding the binning scheme at $\frac{1}{2}$ and giving bins above that threshold value the positive prediction and predicting negative otherwise. 
Observe that increase in expected error is how much that bin is shifted from the true probability $h^{*}(x)$.

For each bin $r \in[R]$, the shift in 
$|r - \hat{r}|$ is at most the fraction of points in the bin that are malicious noise. 
Let $x \in MAL$ mean point $x$ is a  corrupted point.

Then 
\begin{align*}
& \mathbb{E}_{x \sim \mathcal{D}}[h^{*}(x)-\hat{h}(x)] \leq \sum_{r \in [R]} P[x \in r] |r-\hat{r}| \\
%\leq \sum_{r \in [R]} P[x \in r] |r-\hat{r}| \\
& = \sum_{r \in [R]} P[x \in r] \frac{P[x \in r \cap x \in MAL]}{P[x \in r]} \\
& \leq \sum_{r \in [R]} P[x \in r \cap x \in MAL] = O(\alpha) \quad \text{ Definition of Malicious Noise Model}
\end{align*}
Note that this is considering $L1$ error, accuracy loss is less than for $L2$ error, immediate for since $\alpha \in [0,1)$.

\end{proof}

\section{Minimax Fairness}
\label{subsec:minimaxfair}
In this Section, we will briefly and informally consider Minimax Fairness.
Introduced in \cite{minimaxfair} this notion  optimizes for a different objective. 

Using their notation ($\epsilon_k = \mathbb{E}_{(x,y) \sim \mathcal{D}_k}[h(x) \neq y]$ or group-wise error) with a groupwise max error bound of $1> \gamma > 0$
\begin{align*} 
h^{*} = \argmin_{h \in \Delta{H}} \quad  \mathbb{E}_{(x,y) \sim \mathcal{D}}[h(x)\neq y] \\
\max_{1 \leq k \leq K}  \epsilon_{k} (h) \leq \gamma
\end{align*}
Letting $OPT$ refer to the value of solution of the optimization problem, the learning goal is to find an $h$ that is $\epsilon$-approximately optimal for the mini-max objective, meaning that $h$ satisfies: 
\[ max_{k} \epsilon_{k}(h) \leq OPT + \epsilon\]

Observe that if the goal of the learner is compete with the value of $OPT$ on the unmodified data, in our malicious noise model this objective is
ineffective since if one group is of size $O(\alpha)$, the adversary can always drive the error rate on that group $\Omega(1)$.

This model seems incompatible with malicious noise due to the sensitivity of minimax fairness to small groups. 

Observe that the Minimax Fairness framework includes Equalized Error
rates as a special case.

%\begin{align*}
%& \min_{h \in \simplex{H}} err(h) \\
%& \text{subject to} err_{k}(h) \leq \gamma, k=1, \dots K
%\end{align*}

%Observe that when one sub-group is $O(\alpha)$ size, the adversary can always drive t

%\section{Multiple Groups}
%\label{multgroup}

%In this section will summarize our results for the case of $k$-multiple groups for the convience of the reader ---- extend

\end{document}